\documentclass[11pt]{article}
\usepackage[utf8]{inputenc}
\usepackage{parskip}
\usepackage[dvipsnames]{xcolor}
\usepackage[T1]{fontenc}
\usepackage{array}
\usepackage[utf8]{inputenc}
\usepackage[english]{babel}
\usepackage{pgfplots}
\usepackage{wrapfig}
\usepackage{graphicx,wrapfig,lipsum}
\usepackage{float}    
\usepackage{verbatim} 
\usepackage{amsmath, amssymb, amsthm, mathtools}
\usepackage{caption}
\usepackage{subcaption}
\usepackage[colorlinks,linkcolor=purple, citecolor=Green, urlcolor=magenta, anchorcolor=ForestGreen,bookmarks=False]{hyperref}
\usepackage{cleveref}
\usepackage{bookmark}
\usepackage{fullpage}
\usepackage{enumerate}
\usepackage{paralist}
\usepackage{xspace}
\usepackage{bbm}
\usepackage{bm}
\usepackage{url}
\usepackage{lipsum}
\usepackage{algorithm, algorithmic}
\usepackage{color}
\usepackage{microtype}
\usepackage[section]{placeins}
\usepackage{lscape}
\usepackage{multirow}
\usepackage{todonotes}
\usepackage{enumitem}
\usepackage{booktabs}
\usepackage{accents}
\setlist[enumerate,1]{label=(\roman*)}
\makeatletter

\newcommand{\norm}[1]{\left\lVert#1\right\rVert}

\newcolumntype{L}[1]{>{\raggedright\arraybackslash}p{#1}}
\newcolumntype{C}[1]{>{\centering\arraybackslash}p{#1}}
\definecolor{secgray}{gray}{0.88}

\newcommand{\cO}{\mathcal{O}}
\newcommand{\cP}{\mathcal{P}}

\newcommand{\cX}{\mathcal{X}}

\newcommand{\cZ}{\mathcal{Z}}

\newcommand{\bB}{\mathbb{B}}

\newcommand{\bP}{\mathbb{P}}
\newcommand{\bQ}{\mathbb{Q}}
\newcommand{\bR}{\mathbb{R}}

\newtheorem{theorem}{Theorem}

\newtheorem{lemma}{Lemma}

\newtheorem{corollary}{Corollary}

\newtheorem{assumption}{Assumption}

\def \RR {\mathbb{R}}

\def \QQ {\mathbb{Q}}

\def \lip {\mathrm{lip}}
\newcommand{\EE}{\mathbb{E}}
\newcommand{\W}{\mathrm{W}}

\DeclareMathOperator*{\argmax}{argmax}
\DeclareMathOperator*{\argmin}{argmin}

\title{Risk-Averse Wasserstein Distributionally Robust Online Learning}

\date{}
\author{
Guixian Chen\\
University of Michigan\\
\texttt{gxchen@umich.edu}\\
\and
Salar Fattahi\\
University of Michigan\\
\texttt{fattahi@umich.edu}
\and 
Soroosh Shafiee\\
Cornell University\\
\texttt{shafiee@cornell.edu}
}

\pgfplotsset{compat=1.18}
\allowdisplaybreaks

\begin{document}

\maketitle


\begin{abstract}
    We study distributionally robust online learning, where a risk-averse learner updates decisions sequentially to guard against worst-case distributions drawn from a Wasserstein ambiguity set centered at past observations. While this paradigm is well understood in the offline setting through Wasserstein Distributionally Robust Optimization (DRO), its online extension poses significant challenges in convergence. In this paper, we formulate the problem as an online saddle-point stochastic game between a decision maker and an adversary selecting worst-case distributions, and propose a general framework that converges to a robust Nash equilibrium coinciding with the solution of the corresponding offline Wasserstein DRO problem.
\end{abstract}


\section{Introduction}

The primary objective of statistical learning is to identify a decision rule $x$ within a feasible set $\mathcal{X} \subseteq \mathbb{R}^n$ that minimizes the expectation of a loss function $\ell: \cX \times \cZ \to \mathbb{R}$ with respect to an underlying, unknown data-generating distribution $\mathbb{P}^\star \in \mathcal{P}(\cZ)$, where $\mathcal{P}(\cZ)$ denotes the set of all probability distributions supported on $\cZ \subseteq \mathbb{R}^m$. When $\mathbb{P}^\star$ is inaccessible but a static dataset of $T$ i.i.d. observations $\{ \hat{z}_1, \dots, \hat{z}_T \}$ is available, Empirical Risk Minimization (ERM) approximates this goal using the empirical distribution $\hat{\mathbb{P}}_T := \frac{1}{T} \sum_{t=1}^T \delta_{\hat{z}_t}$, where $\delta_{z}$ denotes the Dirac measure centered at $z$. By replacing $\mathbb{P}^\star$ with this plug-in estimator, ERM solves the optimization problem
$$
\inf_{x \in \mathcal X} \ \left\{ \mathbb{E}_{z \sim \hat{\bP}_T}[\ell(x,z)] = \frac{1}{T} \sum_{t=1}^T \ell(x, \hat{z}_t) \right\}.
$$
When observations arrive sequentially, the framework of Online Convex Optimization (OCO) provides efficient algorithms that minimize regret, ensuring convergence to the statistical learning solution as the time horizon $T \to \infty$ \cite{nemirovski2009robust,hazan2022introduction,shalev2025online}.

Despite its theoretical foundations, this standard statistical learning framework suffers from fundamental limitations. First, by relying solely on the expectation as a risk measure, it overlooks higher-order variations, failing to account for the risk sensitivity required in safety-critical applications. Second, the framework is notoriously brittle to data corruption during training, where measurement noise \cite{nettleton2010study} or adversarial manipulation \cite{nietert2023outlier, nietert2024robust} can severely degrade the learned model. Finally, it assumes the testing distribution perfectly matches the training distribution, causing performance to collapse under adversarial distribution shifts \cite{yang2024generalized} or test-time corruption~\cite{kurakin2016adversarial,ian15adversarial}.

Wasserstein Distributionally Robust Optimization (DRO) addresses these challenges in a unified manner. By defining an ambiguity set based on the Wasserstein distance, which captures the underlying geometry of the sample space, DRO effectively models geometric corruptions. Furthermore, it inherently regularizes the model against local perturbations, acting as a penalty on the Lipschitz constant or gradient variation of the loss. 
Formally, for a fixed \(p \in [1, \infty)\), we consider a distributional ambiguity set centered around a reference distribution $\bP \in \mathcal P(\cZ)$, defined as 
\[
    \bB^p_\rho(\bP) := \left\{ \mathbb{Q} \in \cP(\cZ) \;:\; \W_p^p(\mathbb{Q},\bP) \le \rho \right\},
\]
where \(\rho > 0\) is the ambiguity radius and $\W_p (\bP, \bQ)$ denotes the $p$-Wasserstein distance, define as
\[
    \W_p (\bP, \bQ) := \inf_{\gamma \in \Gamma(\bP, \bQ)} \left( \EE_{(z, z') \sim \pi}\left[\| z - z' \|^p \right] \right)^{1/p}.
\]
Here, $\Gamma(\bP, \bQ):= \{\gamma \in \cP(\cZ^2) : \gamma(\cdot \times \cZ) = \bP,\, \gamma(\cZ \times \cdot) = \bQ\}$ represents the set of all couplings with marginals $\bP$ and $\bQ$. Ideally, a risk-averse learner aims to solve the minimax problem centered at the true distribution $\mathbb{P}^\star$:
\begin{align}
    \label{eq:WDRO}
    \inf_{x \in \mathcal{X}} \; \sup_{\mathbb{Q} \in \bB^p_\rho(\bP^\star)}
    \mathbb{E}_{z \sim \mathbb{Q}}[\ell(x,z)].
\end{align}
We note that when $\rho = 0$, the ambiguity set collapses to a singleton, and the problem reduces to the standard statistical learning framework. Since $\bP^\star$ is unknown, this problem is typically solved in the \emph{offline} setting using a data-driven approximation. Specifically, given a static dataset of $T$ i.i.d.\ observations, the standard data-driven DRO approach~\cite{mohajerin2018data} proceeds by substituting $\bP^\star$ with $\hat{\mathbb{P}}_T$ in~\eqref{eq:WDRO} and solving the resulting optimization problem. 

However, many modern applications operate in dynamic environments where data is not available as a static batch but arrives sequentially. In settings such as online recommendation systems \cite{bai2019model,wen2022distributionally} and real-time financial portfolio management \cite{costa2023distributionally}, the learner must adapt to streaming data in real-time. In these scenarios, waiting to accumulate a large dataset to solve a static DRO problem is computationally prohibitive and fails to capture temporal shifts. This necessitates algorithms that can learn sequentially while strictly controlling the risk of worst-case outcomes, motivating the central question of this paper:

\begin{quote}
    \emph{How can we design efficient algorithms that learn sequentially from streaming data while remaining robust to worst-case distributions?}
\end{quote}

\subsection{Summary of Contributions}

We formulate the DRO problem~\eqref{eq:WDRO} as an online zero-sum stochastic game. At each iteration $t$, the environment reveals a sample $\hat z_t$ drawn from $\mathbb P^\star$. Simultaneously, the \textit{dual} player (adversary) selects a worst-case distribution $\mathbb Q_t$ from a Wasserstein ambiguity set centered at the historical observations, while the \textit{primal} player (decision maker) selects a decision $x_t$. The primal player then incurs the \emph{expected} loss with respect to the dual player's chosen distribution $\mathbb{E}_{z \sim \mathbb Q_t}[\ell(x_t, z)]$. Our objective is to design an online algorithm that competes against the offline, risk-averse benchmark defined in~\eqref{eq:WDRO}.

Solving the minimax problem~\eqref{eq:WDRO} in an online fashion presents unique challenges that distinguish it from the application of standard OCO for saddle-point problems \cite[\S~12]{orabona2019modern}. Unlike typical min-max games where the dual variable lies in a fixed, finite-dimensional space, our maximization occurs over the space of probability measures, which is infinite-dimensional. Furthermore, the problem is inherently non-stationary and stochastic. That is, the dual player does not have access to the full ambiguity set $\mathbb{B}^p_\rho(\mathbb{P}^\star)$, but only observes a single sample $\hat z_t$ at each step. Consequently, the immediate ambiguity set changes dynamically with every iteration as the center of the Wasserstein ball shifts based on the incoming data stream. 

In this work, our primary contribution is the development of a novel risk-averse framework for online learning against Wasserstein uncertainty that effectively navigates these dynamic challenges. We formally structure the learning process as an online saddle-point optimization between a primal player, responsible for updating the decision rule $x$, and a dual player that selects worst-case distributions. We show that the resulting online dynamics converge to a robust Nash equilibrium that coincides with the solution of the corresponding offline Wasserstein DRO problem. This framework provides rigorous theoretical guarantees for learning decisions that control tail risk and prevent large losses under admissible distributional perturbations. Finally, although the inner maximization over the infinite-dimensional probability space presents a significant computational challenge, it can be efficiently solved at each iteration by leveraging recent results \cite{chen2026oraclebaseddistributionallyrobustoptimization} to compute a worst-case distribution supported on at most $N+1$ atoms, eliminating the need for general-purpose solvers.

\subsection{Related Works}

\paragraph{Wasserstein DRO}
The Wasserstein metric provides a natural framework for modeling geometric uncertainty and data corruption by capturing the underlying geometry of the input space. To enable practical implementation, convex duality results have been recently developed to make Wasserstein DRO computationally efficient \cite{mohajerin2018data,blanchet2019quantifying,gao2023distributionally}. The empirical success of these methods is often attributed to their theoretical connections with variation-based \cite{gao2024wasserstein, shafiee2025nash} and Lipschitz-based \cite{blanchet2019robust,shafieezadeh2019regularization} regularization. 
Furthermore, Wasserstein DRO offers strong generalization guarantees derived from measure concentration and transport inequality arguments \cite{mohajerin2018data, an2021generalization, gao2023finite}. Despite these strengths, existing approaches rely on offline processing of the full training dataset, a limitation this paper addresses by developing an online framework for sequential data.

\paragraph{Online Wasserstein DRO}
The setting of online Wasserstein DRO remains largely unexplored. \cite{ido2021distributionally} provides the first solution by dualizing the inner maximization to formulate a single minimization problem, and then solving the resulting formulation via online mirror descent. However, the reformulation approach requires strong oracles that solves a potentially nonconvex problem to find the worst-case perturbation. Furthermore, their analysis mandates that the ambiguity set size vanishes as $T \to \infty$. Similarly, \cite{wang2025data} utilize online clustering for data compression but still require solving a full Wasserstein DRO problem on the compressed data at every iteration. In contrast to these methods, we propose a primal-dual algorithm that avoids repeatedly resolving the full optimization problem. Our approach efficiently identifies the worst-case distribution at each step and applies a first-order update to the primal decision.

\paragraph{Online Algorithms for Robust Optimization}
OCO techniques have recently been adapted to robust optimization by casting such problems as semi-infinite programs. Seminal work by \cite{ben2015oracle} and follow-ups by \cite{ho2018online, ho2019exploiting} reduce the problem to repeated robust feasibility checks via regret-minimizing algorithms, while more recent approaches \cite{postek2024first, tu2024max} avoid bisection through perspective reformulations or Lagrangian relaxations. These ideas extend naturally to DRO with specific ambiguity sets. For example, \cite{namkoong2016stochastic} and \cite{aigner2023data} use primal--dual updates for $f$-divergence sets with discrete support, while dualizing the inner maximization enables direct online minimization for $f$-divergence sets \cite{qi2021online} and Wasserstein sets \cite{ido2021distributionally}. Closely related is the prediction-with-expert-advice framework, including the Weighted Average and Aggregating Algorithms \cite{kivinen1999averaging, vovk1990aggregating}, which can be viewed as Follow-the-Regularized-Leader on a probability simplex with negative entropy regularization.
Distinct from these approaches, our work considers Wasserstein ambiguity sets without discreteness assumptions and solves the problem in a fully online manner, naturally interpreted as a dynamic game between the learner and a Wasserstein-constrained adversary.

\paragraph{Adversarial Training and Domain Shift}
Adversarial training was originally proposed to reduce the sensitivity of machine learning models to small, carefully crafted noise \cite{ian15adversarial,kurakin2016adversarial}. This defense strategy can be rigorously reformulated as a robust optimization problem with box uncertainty, which is mathematically equivalent to a Wasserstein DRO problem using an $\infty$-Wasserstein ambiguity set \cite{gao2024wasserstein}. \cite{sinha2018certifying} extended this formulation to the general $p$-Wasserstein setting, establishing a framework that naturally accommodates adversarial domain shifts where the test distribution differs from the training distribution via bounded adversarial corruption. Wasserstein DRO offers provable robust generalization guarantees when facing such shifts \cite{lee2018minimax, tu2019theoretical, wang2019convergence, kwon2020principled, volpi2018generalizing}. By solving the Wasserstein DRO problem in a fully online fashion, our approach is naturally suited to this adversarial setting.

\paragraph{Risk-Averse Online Learning}
A classic result by \cite{artzner1999coherent} establishes that any coherent risk measure can be dually represented as a DRO problem over a specific ambiguity set. In optimal control, risk sensitivity is traditionally modeled using the entropic risk measure, particularly within the linear-exponential-Gaussian framework \cite{jacobson1973optimal, whittle1990risk}. In reinforcement learning, this perspective has expanded to include objectives based on the Conditional Value-at-Risk \cite{chow2014algorithms, hau2023dynamic} and exponential utility functions \cite{borkar2002q}. Similarly, the multi-armed bandit literature has addressed risk sensitivity through mean-variance criteria \cite{sani2012risk, vakili2016risk} and CVaR-based exploration \cite{galichet2013exploration}. While these approaches typically rely on specific functional forms of risk or $f$-divergence ambiguity sets, the notion of risk sensitivity in our work is geometrically induced by the Wasserstein ambiguity set. 

\paragraph{Distributionally Robust Regret Optimization}
A related paradigm is Distributionally Robust Regret Optimization (DRRO). In this setting, the minimax objective in~\eqref{eq:WDRO} is modified to minimize the worst-case regret or excess risk—defined as the difference between the loss and the optimal loss under the worst-case distribution—rather than the worst-case expected loss itself. While DRRO achieves statistical minimax optimality under distributional shifts \cite{agarwal2022minimax}, it introduces significant computational challenges. Recent work has addressed these issues for Wasserstein ambiguity sets \cite{chen2021regret, bitar2024distributionally, fiechtner2025wasserstein, xue2025robustness}. We emphasize that although our analysis employs cumulative regret as a performance metric, our objective remains minimizing the robust loss, which differs from the minimax regret formulation studied in the DRRO literature.

\subsection{Notation and Outline}

Let $\|\cdot\|$ denotes the Euclidean norm. The set of positive integers up to $n \in \mathbb{N}$ is denoted by $[n]$. We write $\cP(\cZ)$ for the family of Borel probability measures on $\cZ \subseteq \RR^m$, equipped with the $p$-Wasserstein distance, where $p \in [1, \infty)$.
We write $\EE_{\bP}[\ell(x, z)]$ for expectation of $\ell (x, z)$ with respect to $z \sim \bP$; when clear from the context, the parameter and the random variable are dropped and we write $\EE_{\bP}[\ell]$. We write $\Pi_\cX$ as the projection operator onto a closed and convex set $\cX$. Let $\partial f(x)$ denote the subdifferential of $f$ at $x$ if $f$ is convex, or the superdifferential if $f$ is concave. When clear from the context, $\partial f(x)$ may also refer to a specific subgradient or supergradient. For $p \in [1,\infty)$, the $p$-th order homogeneous Sobolev (semi)norm of continuously differentiable $f: \cZ \rightarrow \RR$ w.r.t. $\bP$ is $\norm{f}_{\dot{H}^{1,p} (\bP)}$. The Lipschitz constant of Lipschitz continuous $f: \cZ \rightarrow \RR$ is $\norm{f}_{\lip}$.

The remainder of the paper is organized as follows. Section~\ref{sec:setup} introduces the problem setup and assumptions. In Section~\ref{sec:framework}, we analyze the convergence of the proposed algorithm, assuming access to an oracle for the (inner) worst-case expectation problem.


\section{Problem Setup}
\label{sec:setup}

In this section, we formalize the structural assumptions required for our analysis. 

\begin{assumption}[Regularity]
\label{asp::region}
    The feasible region $\cX \subseteq \mathbb{R}^n$ is nonempty, convex and compact, with diameter $D_\cX$. The support of the random variable $\cZ \subseteq \mathbb{R}^m$ is nonempty, closed and convex.
    For any fixed $z \in \cZ$, the loss function $\ell(\cdot, z)$ is real-valued, convex, and Lipschitz continuous with Lipschitz constant $G_{\cX} > 0$. For any $x \in \cX$, there exists a constant $g > 0$ such that $\ell(x, z) \leq g (1 + \| z \|^p)$ for all $z \in \cZ$. 
\end{assumption}
Assumption~\ref{asp::region} ensures that the optimization problem 
\begin{align}
    \label{eq:WDRO:t}
    \inf_{x \in \mathcal{X}} \; \sup_{\mathbb{Q} \in \bB^p_\rho(\hat \bP_t)} \big\{ f(x, \mathbb{Q}) := \mathbb{E}_{z \sim \mathbb{Q}}[\ell(x, z)] \big\}
\end{align}
has finite optimal value \cite[Theorem~1]{gao2023distributionally}. By linearity of the expectation, $f(x, \bQ)$ is affine in $\bQ$. Hence, it inherits convexity in $x$ from the loss function $\ell$. Moreover, the Lipschitz continuity of $\ell$ also extends to its expectation:
\[
    |f(x_1, \bQ) - f(x_2, \bQ)| \leq \mathbb{E}_{z \sim \bQ} [|\ell(x_1, z) - \ell(x_2, z)|] \leq G_{\cX} \|x_1 - x_2\|,\ \forall x_1,x_2 \in \cX.
\]
This structural regularity allows us to establish strong duality for~\eqref{eq:WDRO:t}, implying that the order of the $\sup$ and $\inf$ operators can be interchanged without affecting the optimal value. However, strong duality alone does not guarantee the existence of a finite optimal solution to~\eqref{eq:WDRO:t}, as the supremum or infimum may fail to be attained. To ensure the existence of a solution pair $(x^\star, \bQ^\star)$ for~\eqref{eq:WDRO:t} (also known as a \textit{saddle point}), additional assumptions are required; see, for example,~\cite[Theorem~1]{shafiee2025nash}. One such assumption that guarantees the existence of a saddle point is the piecewise structure of the loss function proposed by~\cite{mohajerin2018data}.

\begin{assumption}[Piecewise Structure]
\label{asp::convexity}
    The loss function $\ell(x, z) := \max_{k\in [K]} \ell_k(x, z)$, where for every fixed $x\in \cX$ and $k\in [K]$, the function $\ell_k: \cX \times \cZ \to \bR$ is concave and differentiable.
\end{assumption}

As we will see in the next lemma, the above assumption guarantees the existence of a saddle point for~\eqref{eq:WDRO:t}, allowing us to replace the $\inf$ and $\sup$ operators with $\min$ and $\max$, respectively, which is a requirement for any online saddle-point algorithms. Beyond this, the assumption offers an additional benefit: it enables~\eqref{eq:WDRO:t} to admit a finite-dimensional reformulation~\cite[Theorem 4.2]{mohajerin2018data}. We exploit this key property to design an efficient algorithm for solving the inner worst-case expectation problem. Notably, the assumed piecewise structure already encompasses a broad class of robust regression and classification models and is particularly attractive since any smooth function can be approximated arbitrarily well by piecewise linear functions.

\begin{lemma}[Existence of Saddle Point]
\label{lem::existence_saddle}
    Suppose Assumptions~\ref{asp::region} and~\ref{asp::convexity} hold. Moreover, if $p=1$, suppose in addition that either $\cZ$ is compact or there exists a constant $g>0$ such that $\ell(x,z) \le g(1+\|z\|^r)$ for all $z \in \cZ$ and some $r \in (0,1)$. Then,
    \[
        \min_{x \in \cX} \max_{\mathbb{Q} \in \bB^{p}_{\rho}(\hat {\mathbb{P}}_t)} f(x, \mathbb{Q}) = \max_{\mathbb{Q} \in \bB^{p}_{\rho}(\hat {\mathbb{P}}_t)} \min_{x \in \cX} f(x, \mathbb{Q}).
    \]
\end{lemma}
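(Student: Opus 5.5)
Three facts give the claimed identity: weak duality, the inf--sup equality, and attainment on both sides. Weak duality $\max\min \le \min\max$ always holds. For the inf--sup equality I would invoke the strong duality result already discussed after Assumption~\ref{asp::region}, namely a minimax theorem of Sion type. The function $f(x,\bQ)$ is convex and $G_\cX$-Lipschitz in $x$ and affine in $\bQ$. The set $\cX$ is convex and compact, and $\bB^p_\rho(\hat\bP_t)$ is convex because $\W_p^p$ is jointly convex in its arguments. Sion's theorem needs compactness on only one side, which $\cX$ provides, and continuity in $x$ for each fixed $\bQ$, which holds. It also needs upper semicontinuity of $f(x,\cdot)$ in some topology on the ball, or else the version of the minimax theorem that requires compactness only of $\cX$ plus concavity in $\bQ$ (Ky Fan/Sion for convex--concave-like functions). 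This gives
\[
\inf_{x}\sup_{\bQ} f = \sup_{\bQ}\inf_x f,
\]
and the left infimum is attained because $x\mapsto \sup_\bQ f(x,\bQ)$ is a supremum of $G_\cX$-Lipschitz functions. That supremum is finite, since the growth condition $\ell\le g(1+\|z\|^p)$ bounds it on the $p$-Wasserstein ball. Hence it is Lipschitz and attains its minimum on the compact set $\cX$.

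The real work is attainment of the outer supremum on the right, i.e.\ showing that $\phi(\bQ):=\min_{x\in\cX} f(x,\bQ)$ attains its supremum over the ball. I would use Assumption~\ref{asp::convexity} and the finite-dimensional reformulation of \cite[Theorem 4.2]{mohajerin2018data}, or more directly \cite[Theorem~1]{shafiee2025nash}. Since $\hat\bP_t$ has $t$ atoms, I would restrict to distributions of the form
\[
\bQ=\frac1t\sum_{i=1}^t\sum_{k=1}^K \alpha_{ik}\,\delta_{\hat z_i+q_{ik}/\alpha_{ik}},
\]
subject to $\frac1t\sum_{i,k}\alpha_{ik}\|q_{ik}/\alpha_{ik}\|^p\le\rho$. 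The key step is showing that this family is without loss of optimality for $\sup_\bQ\min_x f$. The reason is that for each fixed $x$, the concavity of each $\ell_k(x,\cdot)$ lets us replace the transported mass from $\hat z_i$ that activates piece $k$ by its barycenter. By Jensen this does not decrease $\ell_k$, and by convexity of $\|\cdot\|^p$ it does not increase the transport cost. Doing this uniformly in $x$ is delicate, since the active piece depends on $x$. To get around this I would bound $\min_x f(x,\bQ)\le \min_x \frac1t\sum_{i,k}\alpha_{ik}\,\ell_k(x,\hat z_i+q_{ik}/\alpha_{ik})$, using the barycentric $\bQ'$ built from a partition according to which piece attains the max. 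Here the partition is chosen for the optimal $x^\star$ of the left side, together with a saddle argument.

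If that route becomes messy, I would fall back on compactness. The ball $\bB^p_\rho(\hat\bP_t)$ is tight and weakly compact when $\cZ$ is closed; this is standard, since it is closed under weak convergence by lower semicontinuity of $\W_p$. It then suffices to show that $\phi$ is weakly upper semicontinuous on the ball, and $\phi$ is a minimum of the functions $\bQ\mapsto \EE_\bQ[\ell(x,\cdot)]$. Each of these is upper semicontinuous on the ball provided $\ell(x,\cdot)$ has growth strictly below $\|z\|^p$. For $p>1$ this needs care because the growth is exactly of order $p$. I would handle that case via the piecewise-concave structure: a concave $\ell_k(x,\cdot)$ grows at most linearly, hence $o(\|z\|^p)$, which gives uniform integrability on the ball. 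For $p=1$ linear growth is not enough, which is exactly why the extra hypothesis is imposed. If $\cZ$ is compact, weak convergence suffices. If $\ell\le g(1+\|z\|^r)$ with $r<1$, then bounded first moments give uniform integrability. A minimum of upper semicontinuous functions is upper semicontinuous, so $\phi$ attains its max on the compact ball.

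The main obstacle is this upper semicontinuity and uniform-integrability step, specifically verifying that the growth of $\ell(x,\cdot)$ is strictly subcritical relative to the transport cost. I expect the argument to work cleanly via the linear-growth consequence of concavity when $p>1$, and via the added hypotheses when $p=1$.
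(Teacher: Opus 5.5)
Your compactness argument is a complete proof, and it takes a genuinely different route from the paper. The paper gives no argument of its own. It defers to \cite[Lemma~1]{chen2026oraclebaseddistributionallyrobustoptimization}, attributes existence to the piecewise-concave structure (cf.\ \cite[Theorem~1]{shafiee2025nash}), and justifies the extra $p=1$ hypotheses only informally, as preventing vanishing mass from escaping to infinity along recession directions of $\cZ$.

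Your chain has four steps: Fan's minimax theorem with compactness only on $\cX$; weak compactness of $\bB^p_\rho(\hat\bP_t)$ from Prokhorov plus lower semicontinuity of $\W_p$; weak upper semicontinuity of $\bQ\mapsto\EE_\bQ[\ell(x,\cdot)]$ from uniform integrability of an $o(\|z\|^p)$ majorant; and passage to $\phi=\min_x f(x,\cdot)$ as an infimum of u.s.c.\ functions. This turns the paper's remark into a precise mechanism: the $p=1$ hypotheses are exactly what restore uniform integrability. Something of the sort is also necessary. Take $\ell(z)=\max\{0,z-1\}$, $\cZ=\bR_+$, $\hat\bP_t=\delta_0$ and $p=1$; then the value $\rho$ is approached by $(1-\alpha)\delta_0+\alpha\delta_{\rho/\alpha}$ as $\alpha\downarrow 0$ but never attained.

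Your route is also more general. Assumption~\ref{asp::convexity} enters only through an affine majorant of each concave piece, and convexity of $\cZ$ is never used. What it does not deliver is the structural information that the piecewise machinery carries, namely worst-case distributions with finitely many atoms. That is what the paper's oracle later exploits.

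Three small repairs. First, drop the barycentric route: as sketched it does not go through. The active-piece partition depends on $x$, and Assumption~\ref{asp::convexity} does not make the individual $\ell_k$ convex in $x$, so the reduced finite-dimensional problem need not be convex--concave.

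Second, $f(x,\bQ)$ can equal $-\infty$ on the ball when $\ell(x,\cdot)$ tends to $-\infty$ faster than $\|z\|^p$ grows. Since the minimax theorem wants real values, apply it on $\{\bQ\in\bB^p_\rho(\hat\bP_t): f(x,\bQ)>-\infty\}$. This set is convex and does not depend on $x$, because $|\ell(x_1,z)-\ell(x_2,z)|\le G_\cX D_\cX$. It contains $\hat\bP_t$, and hence also any maximizer of $\phi$, since $\phi(\hat\bP_t)>-\infty$.

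Third, say explicitly that the same u.s.c.-on-a-compact argument gives attainment of the inner $\max_\bQ f(x,\bQ)$ on the left-hand side, which the statement also asserts.
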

The proof of Lemma~\ref{lem::existence_saddle} is provided in \cite[Lemma 1]{chen2026oraclebaseddistributionallyrobustoptimization}. We emphasize that, when $p>1$, no additional assumptions beyond Assumptions~\ref{asp::region} and~\ref{asp::convexity} are required to guarantee the existence of a saddle point. In contrast, the case $p=1$ requires more careful analysis, since the worst-case distribution may assign an asymptotically vanishing amount of probability mass to points escaping to infinity along directions in the recession cone of $\cZ$. The additional assumption that $\cZ$ is compact rules out this behavior, as its recession cone reduces to the singleton $\{0\}$. Alternatively, the restriction on the growth condition ensures that sending mass to infinity is never optimal for the worst-case distribution. Next, we assume that the underlying data-generating distribution $\mathbb{P}^\star$ is well-behaved.

\begin{assumption}[Light-tailed Distribution]
\label{asp::light_tailed}
    The underlying data-generating distribution $\mathbb{P}^\star$ is light-tailed, that is, there exists $a > p \geq 1$ such that $\mathbb{E}_{z \sim \mathbb{P}^\star} [\exp(\|z\|^a)] < \infty$.
\end{assumption}
We note that Assumption~\ref{asp::light_tailed} is primarily utilized to leverage the convergence rates of the empirical distribution $\hat{\mathbb{P}}_t$ to $\mathbb{P}^\star$ in the Wasserstein metric, which serves only for deriving an end-to-end result for the convergence rate of our online algorithm in the next section. Our main algorithm relies on the following computational oracle to identify the worst-case distribution.

\begin{sloppypar}
    \begin{assumption}[Wasserstein Oracle]
\label{asp::oracle}
For any $\delta>0$, there exists a Wasserstein Oracle $\mathcal{O}_{\W}(x, \bB^{p}_{\rho}(\bP))$ that returns a distribution $\mathbb{Q}^\star \in \bB^{p}_{\rho}(\bP)$ that satisfies $f(x, \mathbb{Q}^\star) \geq \max_{\mathbb{Q} \in \bB^{p}_{\rho}(\bP)} f(x, \mathbb{Q}) - \delta$.
\end{assumption}
\end{sloppypar}

While we initially assume the existence of such an oracle, it can be efficiently resolved at each iteration. Specifically, we rely on recent results~\cite{chen2026oraclebaseddistributionallyrobustoptimization} which reduce the infinite-dimensional inner worst-case expectation problem exactly to a scalar budget allocation task. This structural insight yields an efficient algorithm that extracts a worst-case distribution supported on at most $N+1$ points, eliminating the reliance on reformulation-based general-purpose solvers.

The resulting procedure, which we call \textit{online distributional best response algorithm}, is detailed in Algorithm~\ref{alg::1}. At each iteration $t$, given the current primal decision $x_t$, the dual player queries the oracle $\cO_{\W}\!\left(x_t, \bB^{p}_{\rho}(\hat{\mathbb{P}}_t)\right)$ to compute its best response, which corresponds to a solution of the inner worst-case expectation problem. The primal player then updates its decision using a single iteration of the projected subgradient method
\( 
    x_{t+1} = \Pi_{\cX}\left(x_{t} - \eta_t \, \partial_x f(x_{t}, \mathbb{Q}_t)\right),
\)
where $\eta_t$ is the step size and $\Pi_{\mathcal{X}}$ denotes the projection onto the feasible set $\mathcal{X}$. 

\begin{figure*}[!t]
\vspace{-1em}
\begin{center}
\begin{minipage}[t]{0.7\textwidth}
\begin{algorithm}[H]
    \small
    \caption{\texttt{Online Distributional Best-Response}}
    \label{alg::1}
    \begin{algorithmic}
        \REQUIRE Initial decision $x_1 \in \cX$, step size $\eta_t > 0$ \\[0.5ex]
        \hspace{-1em}\textbf{Initialize:} $\bar{x}_1 = x_1$ \vspace{0.5ex}

        \FOR{$t=1,2,\dots, T$} \vspace{0.5ex}
            \STATE Query the oracle to find the worst-case distribution: \vspace{-1.5ex}
            $$
            \mathbb{Q}_{t} \leftarrow \mathcal{O}_{\text{W}}(x_{t}, \bB^{p}_{\rho}(\hat{\mathbb{P}}_t))\;
            $$ \\ \vspace{-1.5ex}
            \STATE Update the decision via projected subgradient method: \vspace{-1.5ex}
            $$
            x_{t+1} = \Pi_{\cX}\left(x_{t} - \eta_t \, \partial_x f(x_{t}, \mathbb{Q}_t)\right)\;
            $$ \\ \vspace{-1.5ex}
            \STATE Maintain the average of the decisions: \vspace{-1.5ex}
            $$
            \bar{x}_{t+1} = \frac{1}{t+1} \sum_{i=1}^{t+1} x_i = \frac{t}{t+1}\bar{x}_{t} + \frac{1}{t+1} x_{t+1}\;
            $$ \\ \vspace{-1.5ex}
        \ENDFOR
        \vspace{0.5ex}

        \ENSURE $\bar x_T$
    \end{algorithmic}
\end{algorithm}
\end{minipage}
\end{center}
\end{figure*}

This simple procedure is inspired by the best-response framework of \cite[Algorithm~12.2]{orabona2019modern}. However, unlike classical methods that operate on fixed dual feasible sets, our approach must contend with a non-stationary dual environment. Namely, the dual player has access only to the partial ambiguity set $\bB^{p}_{\rho}(\hat{\mathbb{P}}_t)$, which evolves toward the offline benchmark $\bB^{p}_{\rho}(\hat{\mathbb{P}}^\star)$ as the data stream unfolds. 
This shifting landscape makes a best-response strategy for the dual player essential and unavoidable. In a standard simultaneous primal-dual update, the dual step could easily fall outside the currently valid (and shifting) Wasserstein ball, yielding updates that are either infeasible or insufficiently robust. By \emph{freezing} the dual player's best response $\mathbb{Q}_t$ against the current decision $x_t$, we ensure the learner remains resilient to the most damaging distributional shift currently admissible. For simplicity, we employ the projected subgradient method to update the primal variable, although alternative approaches, including projection-free Frank-Wolfe-type algorithms~\cite{garber2015faster}, can be used without loss of generality.


\section{Convergence Analysis}
\label{sec:framework}

Before proceeding to the formal analysis, we define the learning objective within this online setting and establish the criteria for evaluating the performance of Algorithm~\ref{alg::1}. 
Let $x^\star$ denote an optimal solution to \eqref{eq:WDRO}. Our goal is to ensure that the robust estimate $\bar{x}_t$ produced by Algorithm~\ref{alg::1} converges to $x^\star$ according to a well-defined metric. We evaluate this convergence rate using the \emph{primal} suboptimality gap, a standard performance measure in minimax optimization when the focus is on the primal decision variable. Specifically, we seek to derive an upper bound for the quantity:
\[
    \mathrm{Gap}(\bar{x}_T):=\max_{\mathbb{Q} \in \bB^{p}_{\rho}({\bP}^\star)} f(\bar{x}_T, \bQ) - \min_{x \in \mathcal{X}} \max_{\mathbb{Q} \in \bB^{p}_{\rho}({\bP}^\star)} f(x, \bQ).
\]
The following lemma plays a key role in controlling this suboptimality gap.
\begin{lemma}\label{lemma::decomposition}
    The suboptimality gap of the averaged iterate $\bar{x}_T$ admits the following upper bound:
\[
\begin{aligned}
\mathrm{Gap}(\bar{x}_T)\leq& {\frac{1}{T}\sum_{t=1}^T \left( f(x_{t}, \bQ_t) - f(x^\star, \bQ_t) \right)}+ {\frac{1}{T}\sum_{t=1}^T \left( \max_{\bQ \in \bB^{p}_{\rho}({\bP}^\star)} f(x_{t}, \bQ) - \max_{\bQ \in \bB^{p}_{\rho}(\hat{\bP}_t)} f(x_{t}, \bQ) \right)}\\ 
& + {\frac{1}{T}\sum_{t=1}^T \left(\max_{\bQ \in \bB^{p}_{\rho}(\hat{\bP}_t)} f(x^\star, \bQ) - \max_{\bQ \in \bB^{p}_{\rho}({\bP}^\star)} f(x^\star, \bQ) \right)} + {\delta}.
\end{aligned}
\]
\end{lemma}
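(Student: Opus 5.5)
The plan is to prove the bound by a telescoping decomposition that combines convexity of the worst-case risk with the oracle guarantee in Assumption~\ref{asp::oracle}. Throughout, write $\phi^\star(x) := \max_{\bQ \in \bB^{p}_{\rho}(\bP^\star)} f(x,\bQ)$ and $\phi_t(x) := \max_{\bQ \in \bB^{p}_{\rho}(\hat\bP_t)} f(x,\bQ)$. The maxima are attained under the standing assumptions; otherwise replace them by suprema, and nothing below changes.

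\textbf{Step 1: Jensen on the averaged iterate.} Since $f(\cdot,\bQ)$ is convex for each $\bQ$, the function $\phi^\star$ is a pointwise supremum of convex functions and hence convex. Assuming $\bar x_T$ denotes the average of $x_1,\dots,x_T$, Jensen's inequality gives $\phi^\star(\bar x_T) \le \frac1T\sum_{t=1}^T \phi^\star(x_t)$. If instead $\bar x_T$ averages $T$ iterates with an off-by-one indexing, I would simply align the sum with the indices used in the statement. Since $x^\star$ minimizes $\phi^\star$, we have $\min_x \phi^\star(x) = \phi^\star(x^\star)$. Therefore
\[
\mathrm{Gap}(\bar x_T) \le \frac1T\sum_{t=1}^T \bigl(\phi^\star(x_t) - \phi^\star(x^\star)\bigr).
\]

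\textbf{Step 2: add and subtract.} For each $t$, split the summand as
\[
\phi^\star(x_t)-\phi^\star(x^\star) = \bigl(\phi^\star(x_t)-\phi_t(x_t)\bigr) + \bigl(\phi_t(x_t) - \phi_t(x^\star)\bigr) + \bigl(\phi_t(x^\star)-\phi^\star(x^\star)\bigr).
\]
The first and third brackets are exactly the second and third sums in the claimed bound.

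\textbf{Step 3: bound the middle bracket.} Here the oracle enters. By Assumption~\ref{asp::oracle}, $\bQ_t$ satisfies $\phi_t(x_t) \le f(x_t,\bQ_t) + \delta$. Because $\bQ_t \in \bB^{p}_{\rho}(\hat\bP_t)$ is feasible, $\phi_t(x^\star) \ge f(x^\star,\bQ_t)$. Combining the two,
\[
\phi_t(x_t)-\phi_t(x^\star) \le f(x_t,\bQ_t)-f(x^\star,\bQ_t)+\delta.
\]
Averaging over $t$ then yields the first sum plus $\delta$.

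The only delicate points are well-definedness and indexing: finiteness of $\phi^\star$ and $\phi_t$, which follows from the growth condition in Assumption~\ref{asp::region} together with light tails, and the averaging convention for $\bar x_T$. The inequalities themselves are elementary, so I expect no real obstacle.
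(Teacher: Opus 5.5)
Your proposal is correct and essentially matches the paper's proof: Jensen's inequality on the convex worst-case risk over $\bB^{p}_{\rho}(\bP^\star)$, the same add-and-subtract split, and the middle term bounded using feasibility of $\bQ_t$ at $x^\star$ together with the $\delta$-oracle guarantee at $x_t$. Your indexing hedge is unnecessary, since Algorithm~\ref{alg::1} defines $\bar{x}_T = \frac{1}{T}\sum_{i=1}^{T} x_i$ exactly.
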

\begin{proof}
Using Jensen's inequality, we can write
\begin{align*}
    \mathrm{Gap}(\bar x_T)&=\max_{\mathbb{Q} \in \bB^{p}_{\rho}({\bP}^\star)} f(\bar{x}_T, \bQ) - \min_{x \in \mathcal{X}} \max_{\mathbb{Q} \in \bB^{p}_{\rho}({\bP}^\star)} f(x, \bQ) \\
&\leq  \frac{1}{T} \sum_{t=1}^T \left( \max_{\bQ \in \bB^{p}_{\rho}({\bP}^\star)} f(x_t, \bQ) - \max_{\bQ \in \bB^{p}_{\rho}({\bP}^\star)} f(x^\star, \bQ) \right), 
\end{align*}
where $x^\star\in \argmin_{x\in \cX}\max_{\mathbb{Q} \in \bB^{p}_{\rho}({\bP}^\star)} f(x, \bQ)$. Next, we have
\begin{align*}
    \max_{\bQ \in \bB^{p}_{\rho}({\bP}^\star)} f(x_t, \bQ) - \max_{\bQ \in \bB^{p}_{\rho}({\bP}^\star)} f(x^\star, \bQ)\leq& { f(x_{t}, \bQ_t) - f(x^\star, \bQ_t) }\\
    &+ {\max_{\bQ \in \bB^{p}_{\rho}({\bP}^\star)} f(x_{t}, \bQ) - \max_{\bQ \in \bB^{p}_{\rho}(\hat{\bP}_t)} f(x_{t}, \bQ) }\\
    &+ { \max_{\bQ \in \bB^{p}_{\rho}(\hat{\bP}_t)} f(x^\star, \bQ) - \max_{\bQ \in \bB^{p}_{\rho}({\bP}^\star)} f(x^\star, \bQ) }\\
    &+ {\delta}
\end{align*}
The above inequality is obtained by noting that $\max_{\bQ \in \bB^{p}_{\rho}(\hat{\bP}_t)} \{f(x^\star, \bQ)\}- f(x^\star, \bQ_t)\geq 0$ and applying the Wasserstein oracle from Assumption~\ref{asp::oracle}, which ensures $f(x_{t}, \bQ_t) - \max_{\bQ \in \bB^{p}_{\rho}(\hat{\bP}_t)} \{f(x_{t}, \bQ)\} +  \delta\geq 0$. Combining the above two inequalities completes the proof.
\end{proof}
According to this lemma, the suboptimality gap can be decomposed into four distinct components. The first component characterizes the regret of the \emph{learning step} in Algorithm~\ref{alg::1}, evaluated on the sequence of functions $\{f(\cdot, \bQ_t)\}_{t=1}^T$. The second and third components quantify the sensitivity of the worst-case expectations with respect to two Wasserstein balls, one centered at $\hat{\bP}_t$ and the other at $\bP^\star$. Finally, the fourth component accounts for the error incurred by the Wasserstein oracle.

Among these, the first component is the most straightforward to control, as it reduces to a standard regret analysis of the online projected subgradient method, which we present next.
\begin{lemma}
\label{lem::online_gradient}
    Under Assumption~\ref{asp::region}, for any $T \geq 1$, the sequence $\{ (x_t, \bQ_t) \}_{t=1}^{T}$ generated by Algorithm~\ref{alg::1} with the stepsize $\eta_t = \frac{D_{\cX}}{G_{\cX} \sqrt{t}}$ satisfies
    \[
        \frac{1}{T}\sum_{t=1}^{T} \left(f(x_t, \bQ_t) -  f(x^\star, \bQ_t) \right) \leq \frac{G_{\cX} D_{\cX} (1+\log T)}{\sqrt{T}}.
    \]
\end{lemma}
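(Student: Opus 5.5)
The plan is the standard online projected subgradient regret analysis, applied to the sequence of convex losses $h_t(\cdot) := f(\cdot, \bQ_t)$. These losses are convex and $G_\cX$-Lipschitz on $\cX$ by Assumption~\ref{asp::region} and the discussion following it. The comparator $x^\star$ is fixed, and each $\bQ_t$ is chosen before $x_{t+1}$ is computed, depending only on $x_t$ and $\hat\bP_t$. So $\{h_t\}$ is a legitimate online sequence and no stochastic or duality arguments are needed. Write $g_t := \partial_x f(x_t,\bQ_t)$. Lipschitz continuity gives $\|g_t\|\le G_\cX$.

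The first step is the one-step inequality. Because $\Pi_\cX$ is nonexpansive and $x^\star\in\cX$,
\[
\|x_{t+1}-x^\star\|^2 \le \|x_t-\eta_t g_t - x^\star\|^2 = \|x_t-x^\star\|^2 - 2\eta_t\langle g_t, x_t-x^\star\rangle + \eta_t^2\|g_t\|^2 .
\]
Convexity gives $h_t(x_t)-h_t(x^\star)\le \langle g_t,x_t-x^\star\rangle$. Combining the two,
\[
h_t(x_t)-h_t(x^\star)\le \frac{\|x_t-x^\star\|^2-\|x_{t+1}-x^\star\|^2}{2\eta_t} + \frac{\eta_t G_\cX^2}{2}.
\]

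Next, sum over $t=1,\dots,T$ and handle the time-varying step sizes by Abel summation. Since $\eta_t$ is nonincreasing and $\|x_t-x^\star\|\le D_\cX$, the telescoping part is at most
\[
\frac{D_\cX^2}{2}\Big(\frac1{\eta_1}+\sum_{t=2}^T\Big(\frac1{\eta_t}-\frac1{\eta_{t-1}}\Big)\Big)=\frac{D_\cX^2}{2\eta_T}=\frac{D_\cX G_\cX\sqrt T}{2}.
\]
The second part is $\frac{G_\cX^2}{2}\sum_t \eta_t = \frac{G_\cX D_\cX}{2}\sum_{t=1}^T t^{-1/2}$.

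At this point the sum $\sum_{t=1}^T t^{-1/2}\le 2\sqrt T$ would already give the total bound $\tfrac32 G_\cX D_\cX\sqrt T$. Dividing by $T$ yields $\tfrac{3}{2}G_\cX D_\cX/\sqrt T$. This is at most the stated $G_\cX D_\cX(1+\log T)/\sqrt T$ whenever $1+\log T\ge 3/2$, i.e., $T\ge 2$.

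The only mildly delicate point is the small-$T$ case, here $T=1$. There I would argue directly: $h_1(x_1)-h_1(x^\star)\le G_\cX\|x_1-x^\star\|\le G_\cX D_\cX$, which matches the claimed bound at $T=1$. If the authors instead intended a cruder bound on $\sum_t \eta_t$ through a harmonic-type sum $\sum_t 1/t\le 1+\log T$, the same skeleton applies with only that final estimate changed. Either way the stated inequality follows. I expect no real obstacle beyond this bookkeeping of constants.
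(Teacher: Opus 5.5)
Your proof is correct and is the same standard online projected subgradient regret analysis (nonexpansive projection, convexity, Abel summation with nonincreasing $\eta_t$) that the paper cites from \cite[Section 2.2.2]{orabona2019modern} instead of writing it out. You are also right that this gives the sharper bound $\tfrac{3}{2}G_\cX D_\cX/\sqrt{T}$, so the $\log T$ factor in the statement is slack; using $\sum_{t=1}^T t^{-1/2}\le 2\sqrt{T}-1$ would cover every $T\ge 1$ at once, including $T=1$, without your separate argument.
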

The result follows from the standard regret analysis of the projected subgradient method; see, for example, \cite[Section 2.2.2]{orabona2019modern}.
Unlike the first component, controlling the second and third components in Lemma~\ref{lemma::decomposition} is more delicate. In particular, although $\hat{\bP}_t \to \bP^\star$, it is not immediate how this convergence translates into convergence of the corresponding worst-case expectations over Wasserstein balls centered at $\hat{\bP}_t$ and $\bP^\star$. The following lemma characterizes this effect.

\begin{lemma}
\label{lem::convergence}
    Under Assumptions~\ref{asp::region} and \ref{asp::convexity}, for any $x\in \cX$ and $t \ge 1$, we have
    \[
        \left|\max_{\QQ \in \bB^{p}_{\rho}(\hat{\bP}_t)} f(x,\bQ) - \max_{\QQ \in \bB^{p}_{\rho}(\bP^\star)} f(x,\bQ)\right|
        \leq 
        \left\{
        \begin{aligned}
        & \norm{\ell(x, \cdot)}_{\lip} \W_1(\hat{\bP}_t, \bP^\star), && \mathrm{if}\ p = 1,\\
        & \norm{\ell(x, \cdot)}_{\dot{H}^{1,q} (\bP^\star)} \W_p(\hat{\bP}_t, \bP^\star), && \mathrm{if}\
        p > 1,\\
        \end{aligned}
        \right.
    \]
    where $q > 1$ is the conjugate exponent satisfying $\frac{1}{p} + \frac{1}{q} = 1$ when $p > 1$.
\end{lemma}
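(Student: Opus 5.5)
The plan is to compare the two worst-case values by a \emph{displacement (translation) coupling}. Write $V(\bP) := \max_{\bQ \in \bB^p_\rho(\bP)} f(x,\bQ)$ for fixed $x$. By symmetry it suffices to show $V(\bP^\star) - V(\hat\bP_t) \le (\text{RHS})$ and then run the same argument with the roles of $\hat\bP_t$ and $\bP^\star$ exchanged. The reverse direction needs separate care when $p>1$, because the bound is stated in terms of $\bP^\star$.

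First I fix a maximizer $\bQ^\star \in \bB^p_\rho(\bP^\star)$, which exists under Assumptions~\ref{asp::region}--\ref{asp::convexity} (cf.\ Lemma~\ref{lem::existence_saddle}). I take an optimal coupling $\gamma$ of $(\bP^\star,\bQ^\star)$ with $\EE_\gamma\|z^\star-z'\|^p\le\rho$, and an optimal coupling $\pi$ of $(\bP^\star,\hat\bP_t)$ for $\W_p$. Gluing $\gamma$ and $\pi$ along their common marginal $\bP^\star$ gives a joint law of a triple $(z^\star,z',\hat z)$. I then define $\bQ'$ as the law of $z'' := z' + (\hat z - z^\star)$. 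Since $\|\hat z - z''\| = \|z^\star - z'\|$, the pair $(\hat z, z'')$ couples $\hat\bP_t$ and $\bQ'$ with transport cost at most $\rho$, so $\bQ'\in\bB^p_\rho(\hat\bP_t)$. Here I need $z''\in\cZ$. This is automatic when $\cZ=\RR^m$; for a general closed convex $\cZ$ I would project $z''$ onto $\cZ$, which does not increase the distance to $\hat z\in\cZ$, at the cost of replacing translation by projection in the next step.

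It then follows that
\[
V(\bP^\star)-V(\hat\bP_t)\le \EE\big[\ell(x,z')-\ell(x,z'+\hat z-z^\star)\big].
\]
For $p=1$ this is at most $\norm{\ell(x,\cdot)}_{\lip}\,\EE\|\hat z-z^\star\|=\norm{\ell(x,\cdot)}_{\lip}\W_1(\hat\bP_t,\bP^\star)$, which gives the first case directly. For $p>1$ I would use the mean-value theorem along the segment, which is available because Assumption~\ref{asp::convexity} gives piecewise differentiability of $\ell(x,\cdot)$. Hölder's inequality with exponents $(q,p)$ then bounds the difference by $\|\nabla_z\ell(x,\xi)\|_{L^q}\cdot\big(\EE\|\hat z-z^\star\|^p\big)^{1/p}$, where $\xi$ is an intermediate point on the segment. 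The second factor is exactly $\W_p(\hat\bP_t,\bP^\star)$.

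The main obstacle is the $p>1$ case. The gradient in the Hölder step is evaluated at intermediate points, not under $\bP^\star$, so it does not directly produce $\norm{\ell(x,\cdot)}_{\dot H^{1,q}(\bP^\star)}$. To fix this I would first try to arrange the shift so that the integration variable is distributed as $\bP^\star$. One way is to displace along the $\bP^\star$-side of the coupling instead, using the piecewise concave structure of Assumption~\ref{asp::convexity}. For a concave piece $\ell_k$, the supergradient inequality $\ell_k(u)-\ell_k(v)\le\langle\nabla\ell_k(v),u-v\rangle$ evaluates the gradient at the base point $v$. Choosing $v=z^\star\sim\bP^\star$ (or the corresponding point) makes the $L^q$ norm taken under $\bP^\star$ exactly. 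If the gluing cannot be oriented this way in both directions, my fallback is to accept a Sobolev norm with respect to the relevant mixture measure and note that it reduces to the stated bound to first order.
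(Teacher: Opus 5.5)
Your $p=1$ argument is correct, and it takes a different route from the paper. The paper dualizes: it writes $V(\bP)=\inf_{\lambda\ge0}\{\lambda\rho+\EE_{\bP}[h(z,\lambda)]\}$ with $h(z,\lambda)=\sup_{z'\in\cZ}\{\ell(z')-\lambda\norm{z'-z}^p\}$, uses that $h(\cdot,\lambda)$ is $\lambda$-Lipschitz when $p=1$ and that one may take $\lambda\le\norm{\ell}_{\lip}$ without loss of generality, and then integrates against an optimal coupling. Your translation coupling needs neither duality nor the piecewise structure, and it is symmetric in the two measures. With $z''=\Pi_{\cZ}(z'+\hat z-z^\star)$, nonexpansiveness of $\Pi_{\cZ}$ (which fixes $\hat z,z'\in\cZ$) gives $\norm{\hat z-z''}\le\norm{z^\star-z'}$ and $\norm{z'-z''}\le\norm{\hat z-z^\star}$. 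So the argument works on any closed convex $\cZ$, and it even gives the bound $\norm{\ell(x,\cdot)}_{\lip}\W_1$ for every $p$. Use an $\epsilon$-maximizer instead of Lemma~\ref{lem::existence_saddle}, which needs extra hypotheses when $p=1$.

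For $p>1$ there is a genuine gap. In your construction $\ell$ is evaluated only at transported points $z'\sim\bQ^\star$ and $z''\sim\bQ'$. Any mean-value or supergradient estimate therefore puts the gradient at $z'$, at $z''$, or between them. It is never at $z^\star$, which is not an argument of $\ell$ in $\ell(z')-\ell(z'')$, and nothing relates $\nabla\ell$ at those points to $\nabla\ell$ at nominal points. ``Reduces to the stated bound to first order'' is not an inequality.

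The missing idea is to move the comparison back to the nominal points through duality. To bound $V(\bP_1)-V(\bP_2)$, take $\lambda_\epsilon$ that is $\epsilon$-optimal in the dual for the \emph{subtracted} measure $\bP_2$. Then $V(\bP_1)-V(\bP_2)\le\EE_\pi[h(z_1,\lambda_\epsilon)-h(z_2,\lambda_\epsilon)]+\epsilon$ with $\pi$ a $\W_p$-optimal coupling, so $h$ is now evaluated at $z_1\sim\bP_1$ and $z_2\sim\bP_2$. Write $h=\max_k h_k$ with $h_k(z,\lambda)=\sup_{z'\in\cZ}\{\ell_k(z')-\lambda\norm{z'-z}^p\}$. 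Each $h_k(\cdot,\lambda)$ is concave, by joint concavity in $(z',z)$. For $\lambda>0$ it is also differentiable with $\norm{\nabla_z h_k(z,\lambda)}\le\norm{\nabla\ell_k(z)}$; this is Lemma~\ref{lem::p>1}. Now bound $h(z_1)-h(z_2)$ by $h_k(z_1)-h_k(z_2)$ for $k$ active at $z_1$. The supergradient inequality at $z_2$, this gradient bound, and H\"older then give the constant $\big(\EE_{\bP_2}\max_k\norm{\nabla\ell_k(z)}^q\big)^{1/q}$, with the maximum taken inside the expectation; for $K=1$ this is exactly $\norm{\ell}_{\dot{H}^{1,q}(\bP_2)}$. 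The case $\lambda_\epsilon=0$ is trivial because $h(\cdot,0)$ is constant. Your supergradient instinct is right, but it must be applied to the envelope $h_k$ at nominal points, not to $\ell_k$ at worst-case points.

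Note the orientation this produces: the Sobolev norm is always that of the subtracted measure. With $\bP_2=\bP^\star$ you get $V(\hat\bP_t)-V(\bP^\star)\le\norm{\ell(x,\cdot)}_{\dot{H}^{1,q}(\bP^\star)}\W_p(\hat\bP_t,\bP^\star)$. For $V(\bP^\star)-V(\hat\bP_t)$ you get only the $\hat\bP_t$-norm, which is exactly what Theorem~\ref{thm::y_best_response}(ii) uses for the $x_t$ term.

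The direction you attack first, with the $\bP^\star$-norm, cannot be rescued by re-orienting the coupling, because it fails. Take $m=1$, $\cZ=\RR$, $p=q=2$, $K=1$, $\ell(z)=-\tfrac12(\min\{z,0\})^2$, $\bP^\star=(1-\varepsilon)\delta_0+\varepsilon\delta_{-b}$, $\rho=\varepsilon b^2$, and the realization $\hat\bP_t=\delta_{-b}$, which has probability $\varepsilon^t$. Moving the atom at $-b$ to $0$ shows $V(\bP^\star)=0$. Every feasible $\bQ$ around $\hat\bP_t$ has $\EE_{\bQ}[z]\le-b+\sqrt\rho$, so concavity and monotonicity of $\ell$ give $V(\hat\bP_t)=-\tfrac12(1-\sqrt\varepsilon)^2b^2$. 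Yet $\norm{\ell}_{\dot{H}^{1,2}(\bP^\star)}\W_2(\hat\bP_t,\bP^\star)=\sqrt{\varepsilon(1-\varepsilon)}\,b^2$, which is smaller than $\tfrac12(1-\sqrt\varepsilon)^2b^2$ at $\varepsilon=1/100$. So for $p>1$ the absolute value in the statement should be read as the two one-sided bounds above. That is all the paper's argument delivers, and all its later use requires.
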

For ease of notation, throughout the proof of Lemma~\ref{lem::convergence} we suppress the dependence on the decision variable $x$ and simply write $\ell(z)$ in lieu of $\ell(x,z)$. To begin, we introduce two technical lemmas.

\begin{lemma}
\label{lem::aux_concave}
Under Assumption~\ref{asp::convexity}, for $k\in [K]$, define the function $h_k: \cZ \times \RR_+ \rightarrow \RR$ such that
\[
h_k(z, \lambda) := \sup_{z' \in \cZ} \ell_k(z') - \lambda \norm{z' - z}^p,\ \forall \lambda \geq 0,\, z \in \cZ.
\]
For any fixed $\lambda \geq 0$, the function $h_k(z, \lambda)$ is concave in $z \in \cZ$. Further, define $h(z, \lambda) := \max_{k\in [K]} h_k(z, \lambda)$. We have
\[
h(z, \lambda) = \sup_{z' \in \cZ} \ell(z') - \lambda \norm{z' - z}^p,\ \forall \lambda \geq 0,\, z \in \cZ.
\]
\end{lemma}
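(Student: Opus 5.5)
Two things have to be shown: concavity of each $h_k(\cdot,\lambda)$ for fixed $\lambda\ge 0$, and the identity $h=\sup_{z'}\{\ell(z')-\lambda\|z'-z\|^p\}$. The second is routine and I take it first. By Assumption~\ref{asp::convexity}, $\ell(z')=\max_{k\in[K]}\ell_k(z')$. Suprema commute, and the penalty $\lambda\|z'-z\|^p$ does not depend on $k$. Hence
\[
\sup_{z'\in\cZ}\Bigl\{\max_{k\in[K]}\ell_k(z')-\lambda\norm{z'-z}^p\Bigr\}=\max_{k\in[K]}\sup_{z'\in\cZ}\Bigl\{\ell_k(z')-\lambda\norm{z'-z}^p\Bigr\}=\max_{k\in[K]}h_k(z,\lambda),
\]
and this holds in the extended reals as well. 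Values of $+\infty$ cause no trouble here.

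For concavity I rewrite $h_k$ as a partial maximization of a jointly concave function. Fix $\lambda\ge 0$ and set $\phi(z,z'):=\ell_k(z')-\lambda\|z'-z\|^p$ on $\cZ\times\cZ$. The first term $\ell_k$ is concave in $z'$ by assumption, so it is jointly concave in $(z,z')$, being constant in $z$. For the second term, $(z,z')\mapsto z'-z$ is linear and $u\mapsto\|u\|^p$ is convex for $p\ge1$, because it is a nondecreasing convex function $s\mapsto s^p$ composed with a norm. Thus $\lambda\|z'-z\|^p$ is jointly convex, and $\phi$ is jointly concave on the convex set $\cZ\times\cZ$; convexity of $\cZ$ comes from Assumption~\ref{asp::region}.

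The key step, and the only one needing care, is the standard fact that partial maximization preserves concavity. Take $z_1,z_2\in\cZ$ and $\theta\in[0,1]$. For any $z_1',z_2'\in\cZ$, the point $\theta z_1'+(1-\theta)z_2'$ lies in $\cZ$, so
\[
h_k(\theta z_1+(1-\theta)z_2,\lambda)\ge \phi\bigl(\theta z_1+(1-\theta)z_2,\theta z_1'+(1-\theta)z_2'\bigr)\ge \theta\phi(z_1,z_1')+(1-\theta)\phi(z_2,z_2').
\]
Taking the supremum over $z_1'$ and $z_2'$ then gives $h_k(\theta z_1+(1-\theta)z_2,\lambda)\ge\theta h_k(z_1,\lambda)+(1-\theta)h_k(z_2,\lambda)$. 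The obstacle is the extended-valued case, since $h_k$ may equal $+\infty$ (for instance when $\lambda=0$ or when $\ell_k$ grows faster than order $p$). The inequality still holds in $(-\infty,+\infty]$. Moreover $h_k>-\infty$ everywhere, because the choice $z'=z$ gives $h_k(z,\lambda)\ge\ell_k(z)$. So $h_k(\cdot,\lambda)$ is a concave function with values in $(-\infty,+\infty]$, where the effective domain is either all of $\cZ$ or the function is identically $+\infty$, and this is the sense in which the lemma should be read. Differentiability of $\ell_k$ is not needed for this lemma.
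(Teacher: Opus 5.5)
Your proof is correct and follows the same route as the paper. You use joint concavity of $(z,z')\mapsto \ell_k(z')-\lambda\|z'-z\|^p$ on $\cZ\times\cZ$, preservation of concavity under the partial supremum over $z'$, and the interchange of the finite $\max_{k}$ with $\sup_{z'}$; the only difference is that you write out the partial-maximization inequality and the extended-valued bookkeeping, which the paper leaves implicit.

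One minor caveat: your closing dichotomy (finite on all of $\cZ$ or identically $+\infty$) does not follow from concavity alone. It does hold for this particular $h_k$, and the lemma does not need it anyway:
\begin{itemize}
\item when $\lambda=0$, $h_k(\cdot,\lambda)$ is constant;
\item when $p=1$, it satisfies $h_k(z,\lambda)\le h_k(z_0,\lambda)+\lambda\|z-z_0\|$, so finiteness at one point gives finiteness everywhere;
\item when $p>1$ and $\lambda>0$, it is finite everywhere because $\ell_k$ has an affine majorant.
\end{itemize}
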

\begin{proof}
For any fixed $\lambda \geq 0$, the function $\ell_k(z') - \lambda \norm{z' - z}^p$ is jointly concave in $(z', z) \in \cZ^2$ under Assumption~\ref{asp::convexity}. Therefore, the pointwise supremum $\sup_{z' \in \cZ} \ell_k(z') - \lambda \norm{z' - z}^p$ is concave in $z \in \cZ$. Next, we verify that
\[
\begin{aligned}
h(z, \lambda) &:= \max_{k\in [K]} h_k(z, \lambda)\\
&= \max_{k\in [K]} \sup_{z' \in \cZ} \ell_k(z') - \lambda \norm{z' - z}^p\\
&= \sup_{z' \in \cZ} \max_{k\in [K]} \ell_k(z') - \lambda \norm{z' - z}^p\\
&= \sup_{z' \in \cZ} \ell(z') - \lambda \norm{z' - z}^p.
\end{aligned}
\]
\end{proof}

\begin{lemma}
\label{lem::p>1}
Under the conditions of Lemma~\ref{lem::aux_concave}, assume that $p > 1$ and $\lambda > 0$. Then $h_k(z, \lambda)$ is differentiable with respect to $z$, and $\norm{ \nabla_z h_k (z, \lambda)} \leq \norm{ \nabla \ell_k(z) }$.
\end{lemma}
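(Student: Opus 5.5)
The plan is to show that the supremum defining $h_k(z,\lambda)$ is attained at a unique point $z'(z)$. Danskin's theorem then identifies $\nabla_z h_k$ explicitly. Finally, the first-order optimality condition for $z'(z)$, combined with the monotonicity of the gradient of the concave function $\ell_k$, transfers the bound from the maximizer back to $z$ itself.

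\emph{Step 1 (attainment and uniqueness).} Fix $z\in\cZ$ and $\lambda>0$. Since $\ell_k$ is concave and differentiable, it is bounded above by an affine function: $\ell_k(z')\le \ell_k(z)+\langle\nabla\ell_k(z),z'-z\rangle$. Because $p>1$, the penalty $\lambda\norm{z'-z}^p$ grows superlinearly, so the objective $\phi_z(z'):=\ell_k(z')-\lambda\norm{z'-z}^p$ tends to $-\infty$ as $\norm{z'}\to\infty$. Hence the supremum over the closed set $\cZ$ is attained. For $p>1$ the map $z'\mapsto\norm{z'-z}^p$ is strictly convex in the Euclidean norm, so $\phi_z$ is strictly concave and the maximizer $z'(z)$ is unique. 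The same coercivity estimate holds uniformly for $z$ in a bounded neighbourhood, so the maximizers are locally bounded. This gives the compactness needed for Danskin's theorem.

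\emph{Step 2 (differentiability).} By Danskin's theorem, with a unique and locally bounded maximizer, $h_k(\cdot,\lambda)$ is differentiable. Writing $d:=z'(z)-z$, its gradient is
\[
v:=\nabla_z h_k(z,\lambda)=\lambda p\norm{d}^{p-2}d ,
\]
interpreted as $v=0$ when $d=0$; this is well defined since $p>1$. At points on the boundary of $\cZ$, the derivative is understood relative to $\cZ$; concavity from Lemma~\ref{lem::aux_concave} makes this consistent. I expect this step to be the main technical obstacle. It requires carefully checking the hypotheses of Danskin's theorem on a noncompact, possibly constrained $\cZ$. I would handle it by restricting $z'$ to a compact ball that contains all maximizers for $z$ near a given point, which is available from the local boundedness in Step 1.

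\emph{Step 3 (gradient bound).} Since $z'(z)$ maximizes the concave differentiable function $\phi_z$ over the convex set $\cZ$, we have $\langle \nabla\phi_z(z'(z)),y-z'(z)\rangle\le 0$ for all $y\in\cZ$. A direct computation gives $\nabla\phi_z(z'(z))=\nabla\ell_k(z'(z))-v$. Choosing $y=z\in\cZ$ yields
\[
\langle v,d\rangle\le\langle\nabla\ell_k(z'(z)),d\rangle .
\]
Concavity of $\ell_k$ gives monotonicity of its gradient, $\langle\nabla\ell_k(z'(z))-\nabla\ell_k(z),\,z'(z)-z\rangle\le0$, and therefore
\[
\langle v,d\rangle\le\langle\nabla\ell_k(z),d\rangle\le\norm{\nabla\ell_k(z)}\,\norm{d}.
\]
Since $\langle v,d\rangle=\lambda p\norm{d}^p=\norm{v}\,\norm{d}$, dividing by $\norm{d}$ when $d\neq0$ gives $\norm{v}\le\norm{\nabla\ell_k(z)}$. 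When $d=0$ we have $v=0$, and the bound holds trivially. This is exactly the claimed inequality $\norm{\nabla_z h_k(z,\lambda)}\le\norm{\nabla\ell_k(z)}$, with the gradient of $\ell_k$ evaluated at $z$ itself.
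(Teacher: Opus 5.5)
Your proposal is correct and follows the paper's proof essentially step for step: a unique maximizer from strict concavity plus coercivity, Danskin's theorem for the gradient, the first-order optimality condition at the maximizer tested with the point $z$ itself, monotonicity of $\nabla\ell_k$ from concavity, and Cauchy--Schwarz. The differences are minor improvements: you work with $d=z'(z)-z$ rather than the gradient vector, you write the gradient correctly as $\lambda p\norm{d}^{p-2}d$ (the paper's exponent $p-1$ is a typo, harmless there because only the direction is used), and you spell out the local boundedness of maximizers needed to apply Danskin's theorem on a noncompact $\cZ$, which the paper leaves implicit.
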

\begin{proof}
For any fixed $\lambda > 0$, the function $\ell_k(z') - \lambda \norm{z' - z}^p$ is strictly concave in $z'$ due to $p > 1$ and the concavity of $\ell_k$. Therefore, since $\cZ \subseteq \bR^m$ is convex and closed, and
\[
\limsup_{\norm{z'}\rightarrow \infty} \frac{\ell_k(z')}{\norm{z'-z}^p} \leq 0,
\]
we obtain that $\argmax_{z'\in \cZ} \ell_k(z') - \lambda \norm{z' - z}^p$ is nonempty and has a unique element.
Denote $z^\star = \argmax_{z' \in \cZ} \ell_k(z') - \lambda \norm{z' - z}^p$. Since $\ell_k(z') - \lambda \norm{z' - z}^p$ is differentiable with respect to $z$, by the Danskin's theorem, $h_k(z, \lambda)$ is differentiable with respect to $z$, and
\begin{align}
\label{eq::nabla-h_k}
\begin{aligned}
\nabla_z h_k(z, \lambda) &= \frac{\partial}{\partial z} \left[ \ell_k(z') - \lambda \norm{z' - z}^p\right] \bigg|_{z' = z^\star}
= \lambda p \norm{z^\star - z}^{p-1} (z^\star - z).
\end{aligned}
\end{align}
Since $\cZ$ is convex and closed, by the optimality of $z^\star$, we have 
\[
\langle \nabla \ell_k(z^\star) - \lambda p \norm{z^\star - z}^{p-1} (z^\star - z), z'' - z^\star\rangle \leq 0,\ \forall z'' \in \cZ,
\]
which implies that 
\[
\langle \nabla \ell_k(z^\star) - \nabla_z h_k(z, \lambda), z'' - z^\star\rangle \leq 0,\ \forall z'' \in \cZ.
\]
Let $z'' = z \in \cZ$, and note that, according to \eqref{eq::nabla-h_k}, $- \nabla_z h_k(z, \lambda)$ is in the same direction as $z - z^\star$. Therefore, we have
\[
\langle \nabla \ell_k(z^\star) - \nabla_z h_k(z, \lambda), -\nabla_z h_k(z, \lambda)\rangle \leq 0,
\]
which leads to
\[
\langle \nabla \ell_k(z^\star), \nabla_z h_k(z, \lambda)\rangle \geq \norm{\nabla_z h_k(z, \lambda)}^2.
\]
Since $\ell_k$ is concave, we have
\[
\langle \nabla \ell_k(z^\star) - \nabla \ell_k(z), z^\star - z \rangle \leq 0.
\]
Again, using \eqref{eq::nabla-h_k}, we have
\[
\langle \nabla \ell_k(z^\star) - \nabla \ell_k(z), \nabla_z h_k(z, \lambda) \rangle \leq 0.
\]
It follows that
\begin{align*}
\norm{\nabla_z h_k(z, \lambda)}^2 \leq \langle \nabla \ell_k(z^\star), \nabla_z h_k(z, \lambda)\rangle
\leq  \langle \nabla \ell_k(z), \nabla_z h_k(z, \lambda)\rangle \leq \norm{\nabla \ell_k(z)} \norm{\nabla_z h_k(z, \lambda)}
\end{align*}
Indeed, if $\nabla_z h_k(z, \lambda) = 0$, the statement of the lemma holds trivially. Moreover, if $\nabla_z h_k(z, \lambda) \neq 0$, the above inequality leads to
\[
\norm{\nabla_z h_k(z, \lambda)} \leq \norm{\nabla \ell_k(z)}. 
\]
This completes the proof.
\end{proof}

Now, we are ready to present the proof of Lemma~\ref{lem::convergence}, which is a special case of the following theorem.

\begin{theorem}
Under Assumptions~\ref{asp::region}, \ref{asp::convexity}, given a pair of distributions $\bP_1, \bP_2 \in \cP(\cZ)$, the difference of the worst-case expectation within the Wasserstein ball centered at $\bP_1$ and $\bP_2$ satisfies:
\[
\sup_{\QQ \in \bB^{p}_{\rho}(\bP_1)} \EE_{z \sim \bQ} [\ell(z)] - \sup_{\QQ \in \bB^{p}_{\rho}(\bP_2)} \EE_{z \sim \bQ} [\ell(z)]
\leq 
\left\{
\begin{aligned}
& \| \ell(\cdot) \|_\lip \W_1(\bP_1, \bP_2), && \mathrm{if}\ p = 1,\\
& \norm{\ell(\cdot)}_{\dot{H}^{1,q} (\bP_2)} \W_p(\bP_1, \bP_2), && \mathrm{if}\
p > 1,\\
\end{aligned}
\right.
\]
where $q > 1$ is a constant such that $\frac{1}{p} + \frac{1}{q} = 1$ when $p > 1$.
\end{theorem}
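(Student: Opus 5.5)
We would prove the statement through strong duality for Wasserstein DRO \cite[Theorem~1]{gao2023distributionally}. Under Assumption~\ref{asp::region}, for any $\bP \in \cP(\cZ)$,
\[
\sup_{\QQ \in \bB^{p}_{\rho}(\bP)} \EE_{\bQ}[\ell] = \inf_{\lambda \ge 0} \Big\{ \lambda \rho + \EE_{z\sim \bP}[h(z,\lambda)] \Big\},
\]
where $h$ is the function from Lemma~\ref{lem::aux_concave}. Fix $\epsilon>0$ and choose $\lambda_2 \ge 0$ that is $\epsilon$-optimal for the dual problem centered at $\bP_2$. Using $\lambda_2$ as a feasible, generally suboptimal, multiplier in the dual problem for $\bP_1$ gives
\[
\sup_{\bB^p_\rho(\bP_1)} \EE_\bQ[\ell] - \sup_{\bB^p_\rho(\bP_2)} \EE_\bQ[\ell] \le \EE_{\bP_1}[h(\cdot,\lambda_2)] - \EE_{\bP_2}[h(\cdot,\lambda_2)] + \epsilon .
\]
It therefore suffices to bound $\EE_{\bP_1}[h(\cdot,\lambda)] - \EE_{\bP_2}[h(\cdot,\lambda)]$ uniformly in $\lambda$ and then let $\epsilon \to 0$. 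We may assume both sides are finite, which means the relevant $\lambda$ satisfy $h(\cdot,\lambda)<\infty$.

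\textbf{Case $p=1$.} We show that $h(\cdot,\lambda)$ is $\|\ell\|_\lip$-Lipschitz whenever it is finite. For $z_1,z_2\in\cZ$, the substitution $z' \mapsto z'+(z_2-z_1)$ is awkward because $\cZ$ need not be translation invariant, so we argue differently. If $\lambda \ge \|\ell\|_\lip$, the supremum defining $h(z,\lambda)$ is attained at $z'=z$, so $h(\cdot,\lambda)=\ell$. If $\lambda < \|\ell\|_\lip$, then $h(\cdot,\lambda)$ is an infimal-convolution-type supremum of functions $z\mapsto \ell(z')-\lambda\|z'-z\|$, each of which is $\lambda$-Lipschitz in $z$. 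A pointwise supremum of $\lambda$-Lipschitz functions is $\lambda$-Lipschitz when finite. In both cases the Lipschitz constant is at most $\|\ell\|_\lip$. Kantorovich--Rubinstein duality then yields $\EE_{\bP_1}[h]-\EE_{\bP_2}[h]\le \|\ell\|_\lip \W_1(\bP_1,\bP_2)$.

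\textbf{Case $p>1$.} Let $\gamma$ be an optimal coupling of $(\bP_1,\bP_2)$, with $z_1 \sim \bP_1$ and $z_2 \sim \bP_2$. For $\lambda>0$, each $h_k(\cdot,\lambda)$ is concave by Lemma~\ref{lem::aux_concave} and differentiable with $\|\nabla h_k(z,\lambda)\|\le\|\nabla \ell_k(z)\|$ by Lemma~\ref{lem::p>1}. Concavity gives the supergradient inequality at $z_2$:
\[
h_k(z_1,\lambda)\le h_k(z_2,\lambda)+\langle \nabla h_k(z_2,\lambda),z_1-z_2\rangle .
\]
Choose $k$ attaining $h(z_1,\lambda)=\max_k h_k(z_1,\lambda)$. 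We want the gradient evaluated at $z_2$ to relate to $\nabla\ell(z_2)$, that is, to the active piece of $\ell$ at $z_2$; this is the main obstacle.

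The index mismatch can be handled as follows. Since $h(\cdot,\lambda)$ is a maximum of concave differentiable functions, for almost every $z_2$ one can instead bound $h(z_1)-h(z_2)$ by integrating along the segment from $z_2$ to $z_1$. Write $h(z_1)-h(z_2)=\int_0^1 \langle \nabla h(z_2+s(z_1-z_2)),z_1-z_2\rangle\,ds$ and use $\|\nabla h(y)\|\le \|\nabla \ell(y)\|$ almost everywhere. However, this evaluates $\nabla\ell$ along the segment rather than under $\bP_2$. The cleaner route is to use the piecewise structure directly. At $z_2$, pick $k$ active for $h$ at $z_1$. Then
\[
h(z_1)-h(z_2)\le h_k(z_1)-h_k(z_2)\le \langle\nabla h_k(z_2),z_1-z_2\rangle \le \|\nabla \ell_k(z_2)\|\,\|z_1-z_2\|.
\]
Bounding $\|\nabla\ell_k(z_2)\|$ by the gradient norm of $\ell$ at $z_2$ is exactly where we must interpret $\|\ell\|_{\dot H^{1,q}(\bP_2)}$ through the largest active gradient, i.e.\ the supergradient norm of the max-structure. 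We would adopt that interpretation; this is the step where care is needed. Taking expectations under $\gamma$ and applying H\"older's inequality with exponents $(q,p)$ gives
\[
\EE_{\bP_1}[h]-\EE_{\bP_2}[h]\le \Big(\EE_{\bP_2}\|\nabla\ell\|^q\Big)^{1/q}\Big(\EE_\gamma\|z_1-z_2\|^p\Big)^{1/p}=\|\ell\|_{\dot H^{1,q}(\bP_2)}\W_p(\bP_1,\bP_2).
\]

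The case $\lambda_2=0$ is handled by a limit $\lambda\downarrow0$, or by noting that $h(\cdot,0)=\sup\ell$ is constant, so the difference vanishes. Finally, applying the result with the roles of $(\bP_1,\bP_2)$ swapped yields the absolute-value form in Lemma~\ref{lem::convergence}.
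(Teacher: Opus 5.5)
You have the first half right: the duality reduction with the transferred $\epsilon$-optimal multiplier, the case $p=1$, and the case $\lambda=0$ are sound and match the paper. Your observation that $h(\cdot,\lambda)$ is $\min\{\lambda,\norm{\ell}_{\lip}\}$-Lipschitz for every $\lambda$ is a slightly cleaner substitute for the paper's restriction $\lambda_\epsilon\le\norm{\ell}_{\lip}$.

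\textbf{The gap.} It is in the case $p>1$, at exactly the step you flagged, and no reading of the norm closes it. Your index $k$ is chosen because it is active for $h(\cdot,\lambda)$ at $z_1$. It says nothing about which piece of $\ell$ is active at $z_2$. So the only bound on $\norm{\nabla\ell_k(z_2)}$ that depends on $z_2$ alone is $\max_{j\in[K]}\norm{\nabla\ell_j(z_2)}$, taken over all pieces, active or not. Your chain (supergradient inequality, the gradient bound for $h_k$, H\"older) therefore proves
\[
\sup_{\bQ\in\bB^p_\rho(\bP_1)}\EE_{\bQ}[\ell]-\sup_{\bQ\in\bB^p_\rho(\bP_2)}\EE_{\bQ}[\ell]\le\Big(\EE_{z\sim\bP_2}\max_{j\in[K]}\norm{\nabla\ell_j(z)}^q\Big)^{1/q}\W_p(\bP_1,\bP_2).
\]
It does not prove the stated inequality. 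That inequality is in fact false for $K>1$ if $\norm{\ell}_{\dot{H}^{1,q}(\bP_2)}$ is read through $\nabla\ell$ or through the largest active gradient.

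\textbf{Counterexample.} Take $\cZ=\RR$, $p=q=2$, $\ell(z)=\max\{0,\,z-1,\,-z-1\}$ (affine pieces, independent of $x$), $\bP_2=\delta_0$, $\bP_1=\delta_1$, and any $\rho\in(0,1]$. At $z=0$ only the constant piece is active and $\ell'(0)=0$, so the right-hand side is $0$. Yet $\delta_{1+\sqrt\rho}\in\bB^2_\rho(\delta_1)$, so the first worst-case value is at least $\ell(1+\sqrt\rho)=\sqrt\rho$. Meanwhile $\ell(z)\le z^2/4$ and $\W_2^2(\bQ,\delta_0)=\EE_{\bQ}[z^2]$, so the second worst-case value is at most $\rho/4<\sqrt\rho$.

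\textbf{The paper's proof has the same hole.} It hits the same obstruction, only less visibly, in two places.
\begin{enumerate}
\item It bounds $\EE_\pi[\max_k h_k(z_1,\lambda_\epsilon)-\max_k h_k(z_2,\lambda_\epsilon)]$ by $\max_k\EE_\pi[h_k(z_1,\lambda_\epsilon)-h_k(z_2,\lambda_\epsilon)]$. This is not justified: the pointwise bound $\max_k a_k-\max_k b_k\le\max_k(a_k-b_k)$ only yields $\EE_\pi[\max_k(\cdot)]$, and $\EE_\pi[\max_k(\cdot)]\ge\max_k\EE_\pi[\cdot]$.
\item It finishes with $\max_k\norm{\ell_k}_{\dot{H}^{1,q}(\bP_2)}\le\norm{\ell}_{\dot{H}^{1,q}(\bP_2)}$, which in the example above reads $1\le 0$.
\end{enumerate}
So your diagnosis of the index mismatch as the crux is correct. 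The right repair is to weaken the constant to $\big(\EE_{\bP_2}\max_{j}\norm{\nabla\ell_j}^q\big)^{1/q}$, which agrees with the stated constant when $K=1$. Reinterpreting the Sobolev norm of $\ell$ does not work.
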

\begin{proof}
Define the value function
\[ V_\rho(\mathbb{P}) = \sup_{\QQ \in \bB^{p}_{\rho}(\bP)} \EE_{z \sim \bQ} [\ell(z)]. \]
By \cite[Theorem~1]{blanchet2019quantifying}, we can reformulate the value function as 
\[ V_\rho(\mathbb{P}) = \inf_{\lambda \geq 0} \{ \lambda \rho + \mathbb{E}_{\mathbb{P}}[h(z, \lambda)] \}, \ \text{where} \ h(z, \lambda) := \sup_{z' \in \cZ} \{ \ell(z') - \lambda \|z' - z\|^p \}. \]
For any $\epsilon > 0$, by the definition of the infimum, there exists a $\lambda_\epsilon \geq 0$ such that
\[
    \lambda_\epsilon \rho + \mathbb{E}_{\mathbb{P}_2}[h(z, \lambda_\epsilon)] \leq V_\rho(\mathbb{P}_2) + \epsilon.
\]
By the suboptimality of $\lambda_\epsilon$ in the minimization problem for $V_\rho(\mathbb{P}_1)$, we have
\[ V_\rho(\mathbb{P}_1) \leq \lambda_\epsilon \rho + \mathbb{E}_{\mathbb{P}_1}[h(z, \lambda_\epsilon)]. \] 
Subtracting these inequalities yields
\begin{align}
\label{eq::diff_bound}
    V_\rho(\mathbb{P}_1) - V_\rho(\mathbb{P}_2) 
    \leq \mathbb{E}_{\mathbb{P}_1}[h(z, \lambda_\epsilon)] - \mathbb{E}_{\mathbb{P}_2}[h(z, \lambda_\epsilon)] + \epsilon.
\end{align}
Letting $\pi$ be the optimal coupling for $\W_p(\mathbb{P}_1, \mathbb{P}_2)$, we can rewrite the above expectation as
\[ \mathbb{E}_{\mathbb{P}_1}[h(z, \lambda_\epsilon)] - \mathbb{E}_{\mathbb{P}_2}[h(z, \lambda_\epsilon)] = \mathbb{E}_{(z_1, z_2) \sim \pi} [h(z_1, \lambda_\epsilon) - h(z_2, \lambda_\epsilon)]. \]

\paragraph{Case 1 ($\bm{p=1}$):} 
In this case, the function $h(z, \lambda) = \sup_{z' \in \cZ} \{ \ell(z') - \lambda \|z' - z\| \}$. 
First, by the properties of the Pasch-Hausdorff envelope \cite[Example~9.11]{rockafellar1998variational}, the function $h(\cdot, \lambda)$ is $\lambda$-Lipschitz continuous for any $\lambda \geq 0$. 
Second, we argue that we can restrict the search for $\lambda$ to the interval $[0, \| \ell \|_\lip]$. Suppose $\lambda \ge \| \ell \|_\lip$. Since $\ell$ is $\| \ell \|_\lip$-Lipschitz, for any $z' \in \cZ$ we have $\ell(z') \leq \ell(z) + \| \ell \|_\lip \cdot \|z'-z\|$. Thus, we may conclude that
\[
\ell(z') - \lambda \|z' - z\| \leq \ell(z) + (\| \ell \|_\lip - \lambda) \|z' - z\| \leq \ell(z),
\]
where the last inequality holds because $\| \ell \|_\lip - \lambda < 0$. Since the value $\ell(z)$ is attained at $z' = z$, it follows that $h(z, \lambda) = \ell(z)$ for all $\lambda \geq \| \ell \|_\lip$. In this regime, the dual objective $\lambda \rho + \mathbb{E}[h(z, \lambda)]$ is strictly increasing in $\lambda$. Therefore, the infimum must be attained at some $\lambda \in [0, \| \ell \|_\lip]$, and we can assume $\lambda_\epsilon \leq \| \ell \|_\lip$ without loss of optimality.
Finally, using the $\lambda_\epsilon$-Lipschitzness of $h$ and the optimal coupling $\pi \in \Pi(\mathbb{P}_1, \mathbb{P}_2)$, we have
\begin{align*}
    \mathbb{E}_{\mathbb{P}_1}[h(z, \lambda_\epsilon)] - \mathbb{E}_{\mathbb{P}_2}[h(z, \lambda_\epsilon)] &= \mathbb{E}_{(z_1, z_2) \sim \pi} [h(z_1, \lambda_\epsilon) - h(z_2, \lambda_\epsilon)] \\
    &\leq \mathbb{E}_\pi [\lambda_\epsilon \|z_1 - z_2\|] \\
    &\leq \| \ell \|_\lip \cdot \W_1(\mathbb{P}_1, \mathbb{P}_2).
\end{align*}
Substituting this into \eqref{eq::diff_bound} and taking $\epsilon \to 0$ completes the proof for $p=1$.

\paragraph{Case 2 ($\bm{p>1}$):}
First, we look at the case where $\lambda_{\epsilon} = 0$. It follows that $h(z, 0) = \sup_{z\in \cZ} \ell(z)$, which is constant. Substituting this into \eqref{eq::diff_bound} and taking $\epsilon \to 0$, we have
\[
V_\rho(\mathbb{P}_1) - V_\rho(\mathbb{P}_2) \leq 0,
\]
which completes the proof.
Next, assume that $\lambda_{\epsilon} > 0$. One can write
\begin{equation}
\label{eq::p>1}
\begin{aligned}
\EE_{\bP_1}[h(z, \lambda_{\epsilon})] - \EE_{\bP_2}[h(z, \lambda_{\epsilon})]
&= \EE_{(z_1,z_2) \sim \pi} \left[h(z_1, \lambda_\epsilon) - h(z_2, \lambda_\epsilon) \right]\\
&= \EE_{(z_1,z_2) \sim \pi} \left[\max_{k\in [K]} h_k(z_1, \lambda_\epsilon) - \max_{k\in [K]} h_k(z_2, \lambda_\epsilon) \right]\\
& \leq \max_{k \in [K]} \EE_{(z_1,z_2) \sim \pi} \left[ h_k(z_1, \lambda_\epsilon) - h_k(z_2, \lambda_\epsilon) \right]\\
&\stackrel{(a)}{\leq} \max_{k \in [K]} \EE_{(z_1,z_2) \sim \pi} \left[ \nabla_{z} h_k(z_2, \lambda_\epsilon)^\top (z_1 - z_2)\right]\\
&\stackrel{(b)}{\leq} \max_{k \in [K]}\left( \EE_{(z_1,z_2) \sim \pi} \norm{\nabla_{z} h_k(z_2, \lambda_\epsilon)}^q \right)^{1/q} \left( \EE_{(z_1,z_2) \sim \pi} \norm{z_1 - z_2}^p \right)^{1/p} \\
&\stackrel{(c)}{\leq} \max_{k \in [K]}\left( \EE_{z \sim \bP_2} \norm{ \nabla \ell_k(z)}^q \right)^{1/q} \W_p(\bP_1, \bP_2) \\
&= \max_{k \in [K]} \norm{\ell_k(\cdot)}_{\dot{H}^{1,q} (\bP_2)} \W_p(\bP_1, \bP_2)\\
&\leq \norm{\ell(\cdot)}_{\dot{H}^{1,q} (\bP_2)} \W_p(\bP_1, \bP_2).
\end{aligned}
\end{equation}
Here, \((a)\) follows from the differentiability and concavity of
\(h_k(\cdot, \lambda_\epsilon)\), as established in
Lemma~\ref{lem::p>1} and Lemma~\ref{lem::aux_concave}, respectively.
Moreover, \((b)\) follows from H\"{o}lder's inequality.
Finally, \((c)\) follows from Lemma~\ref{lem::p>1}.
Combining \eqref{eq::diff_bound} and \eqref{eq::p>1} yields
\[
V_\rho(\mathbb{P}_1) - V_\rho(\mathbb{P}_2) \leq \norm{\ell(\cdot)}_{\dot{H}^{1,q} (\bP_2)} W_p(\bP_1, \bP_2) + \epsilon
\]
Letting $\epsilon \rightarrow0$ completes the proof.
\end{proof}

We emphasize that the implication of Lemma~\ref{lem::convergence} is far from trivial and is perhaps surprising: although the worst-case expectations are taken over Wasserstein balls $\bB^{p}_{\rho}(\hat{\bP}_t)$ and $\bB^{p}_{\rho}(\bP^\star)$ with radius $\rho > 0$, the resulting bound is independent of $\rho$ and depends solely on the $p$-Wasserstein distance between $\hat{\bP}_t$ and  $\bP^\star$. Leveraging this key lemma, we are able to relax a restrictive assumption in \cite{ido2021distributionally}, which requires $\rho \to 0$ as $T \to \infty$.

By combining Lemmas~\ref{lem::online_gradient} and~\ref{lem::convergence} with Lemma~\ref{lemma::decomposition}, we can establish the convergence of Algorithm~\ref{alg::1}.

\begin{theorem}
\label{thm::y_best_response}
Under Assumptions~\ref{asp::region}, \ref{asp::convexity}, and \ref{asp::oracle}, consider the averaged iterate $\bar{x}_T$ generated by Algorithm~\ref{alg::1} with stepsize $\eta_t = \frac{D_\cX}{G_\cX \sqrt{t}}$. The following guarantees hold.

\medskip
\noindent
\textbf{(i) Case $\bm{p = 1}$.}
Suppose that either $\cZ$ is compact, or there exists a constant $g>0$ such that $\ell(x,z) \le g(1+\|z\|^r)$ for all $z \in \cZ$ and some $r \in (0,1)$. Then,
\[
\begin{aligned}
&\mathrm{Gap}(\bar{x}_T) 
\leq \frac{G_\cX D_\cX (1+\log T)}{\sqrt{T}} + \frac{1}{T}\sum_{t=1}^T \left( \norm{\ell(x_t, \cdot)}_{\lip} + \norm{\ell(x^\star, \cdot)}_{\lip}\right) \W_1 (\hat \bP_t, {\bP}^\star) + \delta.
\end{aligned}
\]

\medskip
\noindent
\textbf{(ii) Case $\bm{p > 1}$.}
Let $q > 1$ denote the conjugate exponent satisfying $\frac{1}{p} + \frac{1}{q} = 1$. Then,
\[
\begin{aligned}
\mathrm{Gap}(\bar{x}_T) 
\leq \frac{G_\cX D_\cX (1\!+\!\log T)}{\sqrt{T}}  + \frac{1}{T} \sum_{t=1}^T \left( \norm{\ell(x_t, \cdot)}_{\dot{H}^{1,q} (\hat{\bP}_t)} \!+\! \norm{\ell(x^\star, \cdot)}_{\dot{H}^{1,q} ({\bP}^\star)}\right) \W_p (\hat \bP_t, {\bP}^\star) + \delta.
\end{aligned}
\]
\end{theorem}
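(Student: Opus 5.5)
The plan is to begin from the decomposition in Lemma~\ref{lemma::decomposition} and bound its four terms one at a time. The fourth term is simply $\delta$, coming from Assumption~\ref{asp::oracle}. The first term is the regret of projected subgradient descent on the convex, $G_\cX$-Lipschitz functions $f(\cdot,\bQ_t)$. With $\eta_t = D_\cX/(G_\cX\sqrt t)$, Lemma~\ref{lem::online_gradient} bounds it by $G_\cX D_\cX(1+\log T)/\sqrt T$; this uses only Assumption~\ref{asp::region}. Before using the decomposition, I will note that the maxima appearing in it are attained. For $p>1$ this holds with no further hypotheses. For $p=1$ it is exactly where the compactness of $\cZ$ or the sublinear growth condition in part (i) enters, through Lemma~\ref{lem::existence_saddle} and the accompanying remark about mass escaping to infinity. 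Both hypotheses apply to any reference distribution, so they cover the balls centered at $\hat\bP_t$ and at $\bP^\star$ alike.

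The two sensitivity terms are handled with the preceding theorem (equivalently Lemma~\ref{lem::convergence}), applied at each fixed $t$ to the function $\ell(x,\cdot)$ with $x$ frozen. That $\ell(x,\cdot)$ satisfies the theorem's hypotheses is immediate from Assumption~\ref{asp::convexity}.

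\emph{Third term.} Take $x=x^\star$, $\bP_1=\hat\bP_t$ and $\bP_2=\bP^\star$. This bounds $\max_{\bB^p_\rho(\hat\bP_t)}f(x^\star,\cdot)-\max_{\bB^p_\rho(\bP^\star)}f(x^\star,\cdot)$ by $\norm{\ell(x^\star,\cdot)}_{\lip}\W_1(\hat\bP_t,\bP^\star)$ when $p=1$, and by $\norm{\ell(x^\star,\cdot)}_{\dot H^{1,q}(\bP^\star)}\W_p(\hat\bP_t,\bP^\star)$ when $p>1$. The Sobolev norm is taken with respect to $\bP_2=\bP^\star$, which matches the statement.

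\emph{Second term.} The roles are reversed: take $x=x_t$, $\bP_1=\bP^\star$ and $\bP_2=\hat\bP_t$. For $p=1$ this gives $\norm{\ell(x_t,\cdot)}_{\lip}\W_1(\bP^\star,\hat\bP_t)$. For $p>1$ it gives $\norm{\ell(x_t,\cdot)}_{\dot H^{1,q}(\hat\bP_t)}\W_p(\bP^\star,\hat\bP_t)$, again with the norm taken under the second argument $\hat\bP_t$, which is what part (ii) states. Since $\W_p$ is symmetric, both terms end up multiplied by $\W_p(\hat\bP_t,\bP^\star)$. Averaging over $t$ and adding the four contributions then gives (i) and (ii).

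The main obstacle I expect is applying the theorem in the second term, where $\bP_2=\hat\bP_t$. The theorem's proof selects a near-optimal $\lambda_\epsilon$ for $V_\rho(\bP_2)$ and uses an optimal coupling of $\bP_1$ and $\bP_2$. Both exist for every pair of distributions with finite $p$-th moments. Under Assumption~\ref{asp::region}, $V_\rho(\hat\bP_t)$ is finite, and Assumption~\ref{asp::light_tailed} (or at least finite $p$-th moments of $\bP^\star$) makes $V_\rho(\bP^\star)$ and $\W_p(\hat\bP_t,\bP^\star)$ finite. I will therefore check that the finite-moment requirement is recorded, so that the dual reformulation of \cite{blanchet2019quantifying} applies to both centers. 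The rest is bookkeeping.
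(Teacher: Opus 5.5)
Your outline is the paper's own proof. It uses Lemma~\ref{lemma::decomposition}, the regret bound of Lemma~\ref{lem::online_gradient}, and the sensitivity theorem behind Lemma~\ref{lem::convergence}, applied for each fixed $t$ at $x=x_t$ and at $x=x^\star$. Part (i) is complete this way. In part (ii), your choice $\bP_2=\hat\bP_t$ for the second term and $\bP_2=\bP^\star$ for the third is exactly what produces the measures in the statement. The paper omits this case and points to the two-sided Lemma~\ref{lem::convergence}, which would give a $\bP^\star$-weighted norm for the $x_t$ term too. Attainment of the maxima is not needed, since the decomposition and all bounds work verbatim with suprema, so the detour through Lemma~\ref{lem::existence_saddle} can be dropped.

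The obstacle you flag (existence of $\lambda_\epsilon$ and of an optimal coupling) is harmless. The real problem is that for $p>1$ and $K\ge 2$ the sensitivity theorem you invoke is false, so (ii) is not established by this route; the paper's proof has the same defect. Its proof has two faulty steps:
\begin{itemize}
\item After the pointwise bound $\max_k h_k(z_1,\lambda)-\max_k h_k(z_2,\lambda)\le\max_k[h_k(z_1,\lambda)-h_k(z_2,\lambda)]$, it uses $\EE_\pi\max_k[\cdot]\le\max_k\EE_\pi[\cdot]$, which goes the wrong way.
\item Its last line uses $\max_k\norm{\ell_k}_{\dot{H}^{1,q}(\bP_2)}\le\norm{\ell}_{\dot{H}^{1,q}(\bP_2)}$, which fails because $\nabla\ell$ only sees the active piece.
\end{itemize}

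Here is a concrete counterexample. Take $m=1$, $\cZ=\RR$, $p=q=2$, $\rho<\tfrac12$, and $\ell(x,z)=\max\{0,M(z-1)\}$ for every $x$, with $M>0$. Then $h(z,\lambda)=\max\{0,M(z-1)+M^2/(4\lambda)\}$, so weak duality gives $V_\rho(\delta_0)\le M\rho+h(0,M)=M\rho$. On the other hand, $V_\rho(\delta_2)\ge \ell(2)=M$ and $V_\rho(\tfrac12\delta_0+\tfrac12\delta_2)\ge M/2$. Yet $\norm{\ell}_{\dot{H}^{1,2}(\delta_0)}=|\ell'(0)|=0$. With $\bP^\star=\tfrac12\delta_0+\tfrac12\delta_2$, your second-term inequality therefore fails on the event $\hat\bP_t=\delta_0$, which has probability $2^{-t}$.

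The fix is to keep the maximum inside the expectation. For $\lambda>0$, Lemmas~\ref{lem::aux_concave} and~\ref{lem::p>1} give pointwise
\[
h(z_1,\lambda)-h(z_2,\lambda)\le\max_k\nabla_z h_k(z_2,\lambda)^\top(z_1-z_2)\le\max_k\norm{\nabla\ell_k(z_2)}\,\norm{z_1-z_2}.
\]
H\"{o}lder then yields the constant $\bigl(\EE_{z\sim\bP_2}\max_{k\in[K]}\norm{\nabla_z\ell_k(x,z)}^q\bigr)^{1/q}$ in place of $\norm{\ell(x,\cdot)}_{\dot{H}^{1,q}(\bP_2)}$. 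With that constant your directional argument goes through unchanged. With the Sobolev norm of $\ell$ itself, the argument only covers $K=1$.
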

\begin{proof}
To prove this lemma, it suffices to control the individual terms in Lemma~\ref{lemma::decomposition}.

By Lemma~\ref{lem::online_gradient}, we have
\[
        \frac{1}{T}\sum_{t=1}^{T} \left(f(x_t, \bQ_t) -  f(x^\star, \bQ_t) \right) \leq \frac{G_{\cX} D_{\cX} (1+\log T)}{\sqrt{T}}.
\]
On the other hand, when $p = 1$, we may invoke Lemma~\ref{lem::convergence} with $x = x_t$ to obtain
\begin{align*}
    {\frac{1}{T}\sum_{t=1}^T \left( \max_{\bQ \in \bB^{p}_{\rho}({\bP}^\star)} f(x_{t}, \bQ) - \max_{\bQ \in \bB^{p}_{\rho}(\hat{\bP}_t)} f(x_{t}, \bQ) \right)}\leq \frac{1}{T}\sum_{t=1}^T \norm{\ell(x_t, \cdot)}_{\lip} \W_1 (\hat \bP_t, {\bP}^\star).
\end{align*}
Similarly, invoking Lemma~\ref{lem::convergence} with $x = x^\star$ yields
\begin{align*}
    {\frac{1}{T}\sum_{t=1}^T \left( \max_{\bQ \in \bB^{p}_{\rho}({\bP}^\star)} f(x^\star, \bQ) - \max_{\bQ \in \bB^{p}_{\rho}(\hat{\bP}_t)} f(x^\star, \bQ) \right)}\leq \frac{1}{T}\sum_{t=1}^T \norm{\ell(x^\star, \cdot)}_{\lip} \W_1 (\hat \bP_t, {\bP}^\star).
\end{align*}
Combining the above inequalities with Lemma~\ref{lemma::decomposition} completes the proof for the first case ($p = 1$). The second case ($p > 1$) follows by an analogous argument and is therefore omitted for brevity.
\end{proof}
The results above show that the suboptimality gap of the final output is governed by three main factors: (i) the regret incurred by the online projected subgradient method, (ii) the weighted average of the $p$-Wasserstein distance between the empirical measures and the true data-generating distribution, and (iii) the error tolerance of the Wasserstein oracle. When the data-generating distribution $\bP^\star$ is additionally light-tailed, that is, it satisfies Assumption~\ref{asp::light_tailed}, the following corollary can be established.

\begin{corollary}
\label{cor::convergence}
Suppose that Assumption \ref{asp::oracle} is satisfied with $\delta = \left(\frac{1}{T} \log \left(\frac{T}{\tau}\right)\right)^{\min\left\{\tfrac{p}{m}, \tfrac{1}{2}\right\}}$.
Under the conditions of Theorem~\ref{thm::y_best_response} and Assumption~\ref{asp::light_tailed}, for any $p \ge 1$ and any $\tau \in (0,1)$, when $T \geq C_1 \log(\frac{T}{\tau})$, with probability at least $1-\tau$, it holds that
\begin{align*}
    \mathrm{Gap}(\bar{x}_T)
    \le C_2  \left(\frac{1}{T} \log \left(\frac{T}{\tau}\right)\right)^{\min\left\{\tfrac{p}{m}, \tfrac{1}{2}\right\}}.
\end{align*}
Here, $C_1,C_2$ are constants depending on $m$, $p$, $a$, $G_{\cX}$, $D_{\cX}$, $\|\ell(x^\star,\cdot)\|_{\dot{H}^{1,q}(\bP^\star)}$, the exponential moments of $\bP^\star$, and uniform bounds on $\max_{x\in \cX}\left\{\|\ell(x,\cdot)\|_{\mathrm{lip}}\right\}$ and $\max_{x\in \cX}\left\{\|\ell(x,\cdot)\|_{\dot{H}^{1,q}(\hat{\bP}_t)}\right\}$.
\end{corollary}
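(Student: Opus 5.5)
We start from Theorem~\ref{thm::y_best_response} and bound each of its three terms by the target rate $\varepsilon_T := \big(\tfrac{1}{T}\log(T/\tau)\big)^{\min\{p/m,1/2\}}$, up to constants.

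\emph{Oracle and regret terms.} The oracle term is $\delta=\varepsilon_T$ by assumption. The regret term $G_\cX D_\cX(1+\log T)/\sqrt{T}$ is at most a constant times $\sqrt{\log(T/\tau)/T}$ up to the logarithmic factor, which is dominated by $\varepsilon_T$ whenever the exponent is $1/2$ (i.e.\ $m\le 2p$). If $m>2p$, then $\varepsilon_T$ decays more slowly than $T^{-1/2}$, and $(1+\log T)/\sqrt{T}\le C\,\varepsilon_T$ once $T$ exceeds a threshold. We absorb that threshold into the requirement $T\ge C_1\log(T/\tau)$ and into $C_2$. The extra $\log T$ versus $\log(T/\tau)^{1/2}$ is a minor nuisance; we handle it by allowing $C_2$ to absorb $\log$ factors, or by noting $\log T\le\log(T/\tau)$.

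\emph{Wasserstein term.} By the uniform bounds on $\max_{x\in\cX}\|\ell(x,\cdot)\|_{\lip}$ (case $p=1$) and on the Sobolev norms (case $p>1$), the middle term is at most $L\cdot\frac1T\sum_{t=1}^T \W_p(\hat\bP_t,\bP^\star)$ for a constant $L$. We then invoke the measure-concentration result of Fournier--Guillin (as used in \cite{mohajerin2018data}) under Assumption~\ref{asp::light_tailed}. It states that $\bP^\star[\W_p(\hat\bP_t,\bP^\star)\ge \epsilon]\le c_1 \exp(-c_2 t\epsilon^{\max\{m/p,2\}})$ for $\epsilon\le1$. Inverting gives, with probability at least $1-\tau_t$,
\[
\W_p(\hat\bP_t,\bP^\star)\le \Big(\tfrac{\log(c_1/\tau_t)}{c_2 t}\Big)^{\min\{p/m,1/2\}},
\]
valid when the right side is at most $1$, which is essentially the condition $t\gtrsim\log(c_1/\tau_t)$. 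We set $\tau_t=\tau/T$ and take a union bound over $t\in[T]$, so that with probability at least $1-\tau$ the bound holds simultaneously for all $t$ with $\log(c_1T/\tau)$.

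\emph{Averaging.} Summing, $\frac1T\sum_t t^{-\alpha}\le \frac{T^{-\alpha}}{1-\alpha}$ for $\alpha=\min\{p/m,1/2\}\le 1/2$. This gives $\frac1T\sum_t \W_p(\hat\bP_t,\bP^\star)\le C\,\varepsilon_T$.

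\emph{Main obstacle: early rounds.} The principal difficulty is the small-$t$ rounds, where the concentration inequality applies only in its large-$\epsilon$ regime (exponent $t\epsilon^a$ rather than $t\epsilon^{\max\{m/p,2\}}$), so the clean inversion fails. We plan to split the sum at $t_0\approx C_1\log(T/\tau)$. For $t\ge t_0$ we use the bound above. For $t<t_0$ we use the large-deviation branch, which under light tails still gives $\W_p(\hat\bP_t,\bP^\star)\le C(\log(T/\tau)/t)^{p/a}$, or at least a polylogarithmic bound. The early rounds then contribute $O(t_0\,\mathrm{polylog}/T)$, which is $o(\varepsilon_T)$ since $\alpha\le1/2<1$. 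The hypothesis $T\ge C_1\log(T/\tau)$ guarantees $t_0\le T$.

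Collecting the three contributions yields $\mathrm{Gap}(\bar x_T)\le C_2\,\varepsilon_T$ with probability at least $1-\tau$. The constants depend on $m,p,a$, $G_\cX$, $D_\cX$, the exponential moments of $\bP^\star$, and the uniform Lipschitz and Sobolev bounds, as stated.
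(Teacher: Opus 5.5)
Your route is the paper's: Theorem~\ref{thm::y_best_response}, uniform Lipschitz/Sobolev constants, Fournier--Guillin at confidence $\tau/T$ per round with a union bound, $\sum_{t\le T}t^{-\alpha}\le T^{1-\alpha}/(1-\alpha)$, and a split at $t_0\approx\log(T/\tau)$. On the first $t_0-1$ rounds you are more careful than the paper, which folds them into $C_2$ without comment. Your large-deviation argument works there if the branch is read as $\W_p^p(\hat\bP_t,\bP^\star)\lesssim(\log(T/\tau)/t)^{p/a}$, i.e.\ $\W_p\lesssim(\log(T/\tau)/t)^{1/a}$. Its sum over $t<t_0$ is $O(\log(T/\tau))$ because $a>1$. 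A merely polylogarithmic bound would not fit into a $\tau$-independent $C_2$. Two steps, however, fail as written.

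\emph{The regret term.} When $m\le 2p$ the target is $\varepsilon_T=\sqrt{\log(T/\tau)/T}$. For fixed $\tau$, the ratio of $(1+\log T)/\sqrt T$ to $\varepsilon_T$ grows like $\sqrt{\log T}$, so the regret term is not dominated. Neither of your fixes works: $C_2$ cannot depend on $T$, and $\log T\le\log(T/\tau)$ only yields $\sqrt{\log(T/\tau)}\,\varepsilon_T$. What you need is the log-free regret bound, which does hold for $\eta_t=D_\cX/(G_\cX\sqrt t)$. Since $\eta_t$ is nonincreasing and $\|x_t-x^\star\|\le D_\cX$, the telescoping part of the standard analysis is at most $D_\cX^2/(2\eta_T)=\tfrac12 G_\cX D_\cX\sqrt T$, and $\tfrac12 G_\cX^2\sum_{t\le T}\eta_t\le G_\cX D_\cX\sqrt T$. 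Hence the averaged regret is at most $\tfrac32 G_\cX D_\cX/\sqrt T\le \tfrac32 G_\cX D_\cX\,\varepsilon_T$ whenever $\log(T/\tau)\ge 1$. The paper's proof only asserts that the Wasserstein term dominates, so with Lemma~\ref{lem::online_gradient} as stated it has the same hole.

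\emph{The concentration step for $p>1$.} The Fournier--Guillin tail $c_1\exp(-c_2 t\epsilon^{\max\{m/p,2\}})$ for $\epsilon\le 1$ controls $\W_p^p$, not $\W_p$. For $\W_p$ the exponent is $\max\{m,2p\}$, which inverts to $\W_p(\hat\bP_t,\bP^\star)\lesssim(\log(T/\tau)/t)^{\min\{1/m,1/(2p)\}}$. This cannot be improved in general. Take $\bP^\star=\tfrac12\delta_0+\tfrac12\delta_1$ on $\RR$: then $\W_2^2(\hat\bP_t,\bP^\star)$ equals the deviation of the empirical frequency of $0$ from $\tfrac12$. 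So $\W_2(\hat\bP_t,\bP^\star)\asymp t^{-1/4}$ with constant probability, whereas your inequality would give $t^{-1/2}$. Because Theorem~\ref{thm::y_best_response}(ii) is linear in $\W_p$, this route yields only the exponent $\min\{1/m,1/(2p)\}$ for $p>1$, which agrees with $\min\{p/m,1/2\}$ only at $p=1$. The paper writes out only $p=1$ and says $p>1$ follows identically, so it does not close this gap either. As it stands, your argument, like the paper's, establishes the corollary for $p=1$ once the regret term is repaired. One further caveat: at $m=2$ the Fournier--Guillin bound carries an extra logarithmic factor.
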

\begin{proof}
We only present the proof for $p=1$, as the case $p>1$ follows identically. 

According to the first statement of Theorem~\ref{thm::y_best_response}, it suffices to control the term
\[
\frac{1}{T}\sum_{t=1}^T \left( \norm{\ell(x_t, \cdot)}_{\lip} + \norm{\ell(x^\star, \cdot)}_{\lip}\right) \W_1 (\hat \bP_t, {\bP}^\star).
\]
and show that it dominates the other terms. 
We have 
\begin{align}\label{eq::prob-bound}
    \frac{1}{T}\sum_{t=1}^T \left( \norm{\ell(x_t, \cdot)}_{\lip} + \norm{\ell(x^\star, \cdot)}_{\lip}\right) \W_1 (\hat \bP_t, {\bP}^\star)\leq \left(2\max_{x\in \cX}\left\{\|\ell(x,\cdot)\|_{\mathrm{lip}}\right\}\right)\cdot \frac{1}{T} \sum_{t=1}^{T} \W_1 (\hat{\bP}_t, \bP^\star).
\end{align}
By invoking \cite[Theorem 2]{fournier2015rate} under Assumption~\ref{asp::light_tailed}, we guarantee that for all $t \geq \frac{1}{c_2} \log(\frac{c_1 T}{\tau})$,
\[
\W_1 (\hat{\bP}_t, \bP^\star) \leq \left( \frac{1}{c_2 t} \log\left(\frac{c_1 T }{\tau}\right) \right)^{\min\{\frac{p}{m}, \frac{1}{2}\}}
\]
holds with probability at least $1 - \tau / T$. Here, $c_1, c_2$ are positive constants that only depend on $a$, $m$, and $A := \EE_{z \sim \bP^\star} [\exp(\norm{z}^a)]$.
Therefore, with probability at least $1 - \tau$, we have
\[
\begin{aligned}
\sum_{t = \lceil \frac{1}{c_2} \log(\frac{c_1 T}{\tau}) \rceil}^{T} \W_1 (\hat{\bP}_t, \bP^\star)
&\leq \sum_{t = \lceil \frac{1}{c_2} \log(\frac{c_1 T}{\tau}) \rceil}^{T} \left( \frac{1}{c_2 t} \log\left(\frac{c_1 T }{\tau}\right) \right)^{\min\{\frac{p}{m}, \frac{1}{2}\}}\\
&= \left( \frac{1}{c_2} \log\left(\frac{c_1 T }{\tau}\right) \right)^{\min\{\frac{p}{m}, \frac{1}{2}\}} \sum_{t = \lceil \frac{1}{c_2} \log(\frac{c_1 T}{\tau}) \rceil}^{T} \left( \frac{1}{t} \right)^{\min\{\frac{p}{m}, \frac{1}{2}\}}\\
&\leq \left( \frac{1}{c_2} \log\left(\frac{c_1 T }{\tau}\right) \right)^{\min\{\frac{p}{m}, \frac{1}{2}\}} T^{1 - \min\{\frac{p}{m}, \frac{1}{2}\}}.
\end{aligned}
\]
It follows that
\[
\begin{aligned}
\frac{1}{T} \sum_{t=1}^{T} \W_1 (\hat{\bP}_t, \bP^\star)
&= \frac{1}{T} \sum_{t=1}^{ \lceil \frac{1}{c_2} \log(\frac{c_1 T}{\tau}) \rceil - 1} \W_1 (\hat{\bP}_t, \bP^\star)
+ \frac{1}{T} \sum_{t = \lceil \frac{1}{c_2} \log(\frac{c_1 T}{\tau}) \rceil}^{T} \W_1 (\hat{\bP}_t, \bP^\star) \\
&\leq \frac{1}{T} \sum_{t=1}^{ \lceil \frac{1}{c_2} \log(\frac{c_1 T}{\tau}) \rceil - 1} \W_1 (\hat{\bP}_t, \bP^\star) +  \left( \frac{1}{c_2} \log\left(\frac{c_1 T }{\tau}\right) \right)^{\min\{\frac{p}{m}, \frac{1}{2}\}} T^{ - \min\{\frac{p}{m}, \frac{1}{2}\}}\\
&\leq C_2  \left(\frac{1}{T} \log \left(\frac{T}{\tau}\right)\right)^{\min\left\{\tfrac{p}{m}, \tfrac{1}{2}\right\}},
\end{aligned}
\]
where $C_2$ is a constant depending on $m$, $p$, $a$, $A$. This bound combined with \eqref{eq::prob-bound} and the first statement of Theorem~\ref{thm::y_best_response} completes the proof for $p=1$. 
\end{proof}

The established bound highlights the interplay between the dimensionality of the uncertainty set and the convergence rate of the proposed online algorithm. In particular, it confirms that despite operating on streaming data, the online framework preserves the statistical guarantees dictated by measure concentration, consistent with the offline DRO results of \cite{mohajerin2018data}. We also note that the corollary assumes access to a Wasserstein oracle with a prescribed error, which can be efficiently implemented using the exact reduction framework proposed by~\cite{chen2026oraclebaseddistributionallyrobustoptimization}, bypassing the need for computationally demanding general-purpose solvers.


\section{Conclusion}
In summary, we develop a novel framework for distributionally robust online learning under Wasserstein ambiguity, providing rigorous convergence guarantees in non-stationary and stochastic environments. By casting the problem as an online saddle-point game, we establish convergence to the offline Wasserstein DRO solution. The resulting approach establishes a solid theoretical foundation for online sequential decision-making, ensuring that the learned policies can effectively control tail risk and mitigate large losses against adversarial distributional shifts for a broad class of loss functions.

\section*{Acknowledgments}
Salar Fattahi is supported, in part, by the NSF CAREER grant CCF-2337776 and ONR grant N00014-26-1-2074. Soroosh Shafiee is supported, in part, by the NSF CAREER grant ECCS-2541066.


\bibliography{references}


\appendix

\end{document}